
\documentclass[preprint,12pt]{elsarticle}




\usepackage{amssymb}
\usepackage{amsmath,amssymb,amsfonts}
\usepackage{algorithm}
\usepackage{algpseudocode}
\usepackage{graphicx}
\usepackage{textcomp}
\usepackage{amssymb}
\usepackage{subfigure}
\setcounter{tocdepth}{3}
\usepackage{graphicx}
\usepackage{url}
\usepackage{amsmath}
\usepackage{setspace}
\usepackage{multirow}
\usepackage{booktabs}
\newtheorem{theorem}{Theorem}

\newtheorem{proof}{Proof}[section]


\begin{document}

\begin{frontmatter}



\title{Efficient Multi-robot Exploration via Multi-head Attention-based Cooperation Strategy}

\author[1]{Shuqi Liu}
\author[2]{Zhaoxia Wu}

\address[1]{National Key Laboratory of Big Data Management and Analysis, College of Computer and Science, Northeastern University, Shenyang 110004, China; (e-mail: {1701742@stu.neu.edu.cn})}

\address[2]{School of Control Engineering, Northeastern University at Qinhuangdao, Qinhuangdao 066004, China; (e-mail: {1000429@neuq.edu.cn}) }

\begin{abstract}
The goal of coordinated multi-robot exploration tasks is to employ a team of autonomous robots to explore an unknown environment as quickly as possible. Compared with ``human-designed'' methods, which began with heuristic and rule-based approaches, ``learning-based'' methods enable individual robots  to learn sophisticated and hard-to-design cooperation strategies through deep reinforcement learning technologies. However, in decentralized multi-robot exploration tasks, “learning-based” algorithms are still far from being universally applicable  to the continuous space due to the difficulties associated with area calculation and reward function designing; moreover, existing ``learning-based'' methods encounter problems when attempting to balance the ``historical trajectory'' issue and target area conflict problem. Furthermore, the scalability of these methods to a large number of agents is poor because of the exponential explosion problem of state space. Accordingly, this paper proposes a novel approach – Multi-head Attention-based Multi-robot Exploration in Continuous Space (MAMECS) – aimed at reducing the state space and automatically ``learning'' the cooperation strategies required for decentralized multi-robot exploration tasks in continuous space. Computational geometry knowledge is applied to describe the environment in continuous space and to design an improved reward function to ensure a superior exploration rate. Moreover, the multi-head attention mechanism employed helps to solve the ``historical trajectory'' issue in the decentralized multi-robot exploration task, as well as to reduce the quadratic increase of action space.

\end{abstract}

\begin{keyword}
Multi-robot exploration \sep deep reinforcement learning \sep multi-head attention mechanism \sep continuous space


\end{keyword}

\end{frontmatter}


\section{Introduction}
\label{sec:introduction}

The problems associated with exploring an unknown environment using a team of robots are among the fundamental problems in mobile robotics. These problems arise in a wide range of applications, including disaster rescue, planetary exploration, reconnaissance and surveillance \cite{sheng2006distributed, burgard2005coordinated}. The key question during exploration is that of how to figure out each agent's next move so that the overall mission time is minimized and the exploration rate is maximized. We here focus on a sub-problem of robotic exploration, namely that of decentralized multi-robot exploration tasks, where robots make their own decisions without a centralized controller. In order to devise cooperation strategies, robots need to broadcast their local observations and historical trajectories by communicating with each other, which allows more information about the environment to be acquired. 

To this end, several multi-agent exploration approaches have been developed. The original approaches, such as the frontier-based approach \cite{yamauchi1998frontier,faigl2015benchmarking,sharma2016frontier}  and the cost-utility approach  \cite{burgard2000collaborative, colares2016next}, were designed by experts based on cooperation strategies including explicit communication and action rules. However, many real-world applications have proven too complex to be dealt with efficiently by human-designed strategies. Moreover, these approaches also find it difficult to cope with the ``historical trajectory” issue. Most ``pre-designed” methods assume that the current robot only communicates with nearby robots; however, more distant robots that have explored the surrounding areas also need to be involved in the communications network  to avoid repeated exploration. 

Recent work in this area has attempted to combine the strengths of deep learning techniques with the control policies for robotics applications \cite{pinto2016supersizing,chen2014door,shvets2018automatic}. In particular, deep reinforcement learning (DRL) methods allow multiple agents to autonomously learn the required cooperation strategy \cite{kretzschmar2016socially,gu2017deep,kahn2018self}. Therefore, these learning-based approaches can resolve the difficulties associated with developing precise and complicated control strategies for each move, and thus achieve more flexible and effective performance in complex scenarios.

Despite this progress, however, algorithms for multi-agent exploration are still far from being universal (to the continuous space) and scalable (to a larger number of agents). Various previous works \cite{yamauchi1998frontier,carrillo2015autonomous,mox2018information} have modeled the exploration environment as a discrete space in which agents' actions are restricted to their surrounding grids. When extending the task into continuous space, however, it is hard to design an accurate reward function based on the historical trajectories, meaning that some areas may be ignored or repeatedly explored. In addition, the maximum number of agents is limited in previous works of this kind, as the action space increases exponentially with the number of agents. Although recent single-head attention-based methods have shown great potential in multi-agent cooperation tasks by focusing only on the relevant agent, they are still simple and limited compared with multi-head attention mechanisms, as each attention head used in the multi-head methods can focus on a different weighted mixture of agents (e.g. locations, historical trajectories, etc.).

Accordingly, our proposed approach, Multi-head Attention-based Multi-robot Exploration in Continuous Space (MAMECS), extends these prior works in several directions. We model the environment as a continuous space in which agents can move to an arbitrary point at every step. Computational geometry knowledge is applied to describe the environment and design an improved reward function. Inspired by team performance in real-world applications, each team member tends to focus only on the teammates that exist in a cooperative or competitive relationship with itself; we thus learn the multi-agent cooperation strategy through a multi-head attention-based critic. Therefore, each agent is aware of which other agents it should be paying attention to rather than simply considering all agents at every time step. Moreover, the quadratic increase in the action space is sharply reduced due to the selected attention mechanism, meaning that the number of agents involved can be increased.

We have validated our approach MAMECS on the typical multi-robot exploration task. Extensive experiments have shown that MAMECS can perfectly fit the continuous space, effectively extend the total number of agents and improve exploration performance compared with previous works. The rest of this paper is organized as follows. In section \ref{sec:related_work}, we discuss related work, followed by a detailed description of our approach in section \ref{sec:our_approach}. We report experimental studies in section \ref{sec:experiments} and conclusion in section \ref{sec:conclusion}.





\section{Related Work}
\label{sec:related_work}

Our approach MAMECS aims to solve the multi-robot exploration task through multi-head attention-based reinforcement learning in continuous space. Therefore, we mainly focus on two research fields: traditional ``human-designed'' methods for multi-robot exploration and the ``learning-based'' multi-robot exploration approaches. 

\subsection{``Human-designed'' Methods for Multi-robot Exploration}

Multi-robot exploration is a fundamental robotic problem, which employs a team of autonomous robots to explore an unknown environment with obstacles. Most early works started with heuristic and rule-based approaches. Yamauchi's Frontier Based Exploration Using Mobile Robots \cite{yamauchi1998frontier} is a foundational paper used by many successful approaches. In this approach, each robot makes the assignment that maximizes the joint utility to the frontiers and navigate to the nearest unvisited frontier. 

Market-based approaches \cite{zlot2002multi} employs the concept of frontier cells and utility in a market environment to produce complex coordinated strategy in multi-robot exploration. Spanning tree coverage approaches \cite{gabriely2001spanning} adapted the single robot complete coverage algorithm to multi-robot scenario. Each robot is assigned a part of the constructed spanning tree and covers the section in a counterclockwise fashion. Recent approaches \cite{andre2016collaboration, corah2017efficient, corah2019communication} focuses more on the mutual information for ranging sensors, and they attempt to maximize mutual information directly. The above methods are all based on precisely designed rules, and they should take all the details and situations into account. Therefore, these ``pre-designed'' cooperation methods will  perform poorly especially for partial observation task. It is extremely hard for human to design effective strategies only based on the local view of the whole environment.

\subsection{Learning-based Methods in Multi-robot Exploration}

Deep Reinforcement Learning has been proved to be effective for enabling sophisticated and hard-to-design behaviors of robot individuals \cite{kretzschmar2016socially,kahn2018self}. For the multi-robot exploration task, \cite{geng2018learning} proposes a learning-based method to enable the robots to actively learn the cooperation strategies as well as the action policies. Their method is robust enough to handle complex and dynamic environments and beats the performance of several ``human-designed'' methods. The communication model used in \cite{geng2018learning} is CommNet \cite{sukhbaatar2016learning}, which simply averages the communication message to realize coordination. \cite{geng2019learning, liu2019learning} improves the communication process by introducing the attention mechanism, which can precisely calculate whether the communication is necessary for each pair of agents in the exploration scenario. The attention mechanism enables the agents to communicate only with the necessary partners and further improves the cooperation performance. However, the above methods simply model the environment by occupancy grids, which is discrete and easy to represent the information such as historical trajectories. 

In this paper, we focus on the multi-robot exploration task in continuous space, which is extremely difficult due to the reason of reward function designing. Besides, we exploit the multi-head attention mechanism and each head can focus on a different weighted mixture of agents (i.e., the locations, the historical trajectories). Furthermore, our method is more flexible than the existing learning-based methods, which can further increase the number of agents in the limited action space and is closer to reality.

\section{Our Approach}
\label{sec:our_approach}

The exploration rate and scalability of existing multi-robot exploration methods are hard to satisfy the requirement of realistic applications. Therefore, it is meaningful to improve the exploration rate and increase the number of involving agents. In this section, we introduce our MAMECS method from the following aspects: the basic framework, learning the shared attentive critic, continuous environment modeling, design of entropy-oriented reward function and the exploration rate-based training approach.

\subsection{Problem Formulation}

We consider the application scenario of multi-robot exploration as a partially observable distributed environment. Assuming that each agent could obtain the accurate positions of other agents and the obstacles within its visual range. Each agent $i$ learns a policy $a_i = \pi_i(o_1, o_2,..., o_N)$ on $N$ observations which maps each agent's observation $o_i$ to a distribution over the actions $a_i$. The learning process of the individual policy has to regard observations from other agents with focus, so that the number of involving agents could be extended. Therefore, each agent should consider other agents' different contribution to the decision making process, rather than considering them all at all the time. Due to each agent could not weight other agents' observation on their own, they should learn the ability to decide the importance of shared information and calculate a integrated contribution.

\subsection{Framework}

More formally, multi-head attention mechanism is introduced to centrally learn a critic to enable each agent to select which agents to attend to at each time step. The shared critic receives the observations, actions and historical trajectories from all agents and generates Q-values for each agent, and the contribution of other agents' information is evaluated by multiple attention heads through attention weight. In the training process, all critics are updated together by minimizing a joint loss function. The main architecture of MAMECS is shown in Fig. 1. 

\begin{figure}[htbp]
	\centering
	\includegraphics[width=0.8\textwidth]{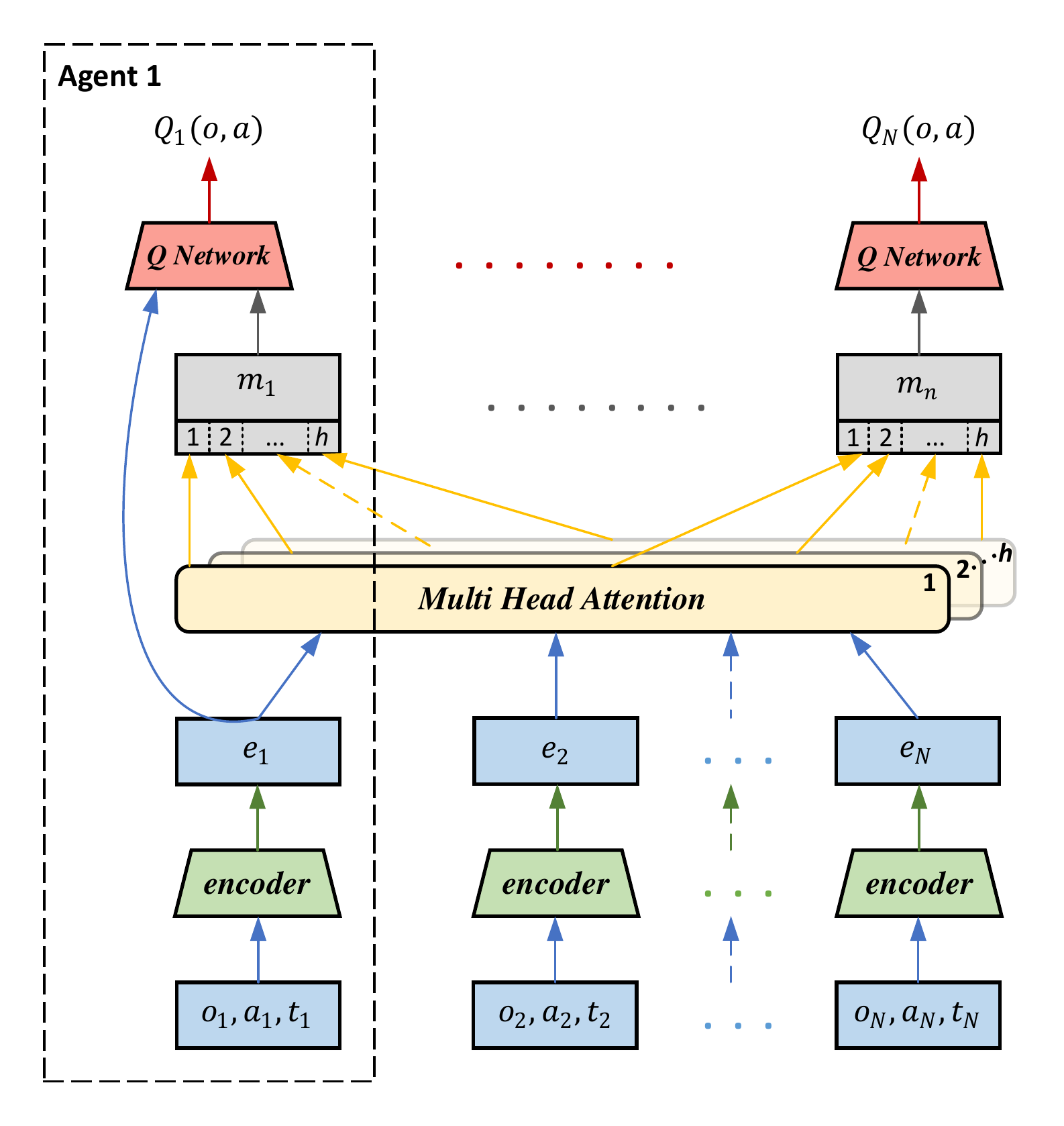}
	\caption{The architecture of MAMECS for $N$ agents. The multi-head attention determines the attention weight between each agent based on the inputs of agents' observations, actions and historical trajectories.}
\end{figure}

The goal of our method is to selectively paying attention to other agents’ information in agent's decision-making process. In detail, the encoder takes each agent's observation, action and historical trajectories $o_{i}, a_{i}, t_{i}$ as input, then the encoded information $e_{i}$ are fed into the multi attention heads to generate the integrated message. $e_{i}$ is evenly separated into $h$ parts, equaling the number of attention heads. Each attention head employs a separate set of parameters to weight the different contribution from other agents, and each part of the information with different weights is concatenated as a single vector $m_{i}$. Each agent takes the relevant information of concatenated message $m_{i}$ and the local information $o_{i}, a_{i}, t_{i}$ into account for estimating its value function $Q_{i}^{\psi}(o,a)$.

The shared critic receives the observations $o={o_1, ..., o_N}$, actions $a={a_1, ..., a_N}$ and historical trajectories $t={t_1, ..., t_N}$ from all agents indexed by $i\in{1,..., N}$. The Q-value function $Q_{i}^{\psi}(o,a)$ for the agent $i$ can be calculated as
\begin{equation}
Q_{i}^{\psi}(o,a)=f_{i}(e_i, x_i)=f_{i}(g_{i}(o_i, a_i, t_{i}), x_i)
\end{equation}

Here, $f_i$ represents the Q-Network of a two-layer multi-layer perception (MLP) and $g_{i}$ is the encoder of a one-layer MLP embedding function. $x_i$ represents the contribution from all other agents, which is the concatenated vector from $\mathcal{H}$ attention heads. For attention head $h$, the corresponding part of $x_i$ is the weighted sum of the attention weight $\alpha_{ij}^{h}$ and the embedded information $v_{j}$. To be concrete, agent $j$'s embedded information $e_{j}$ is transformed by matrix $W_{v}^{h}$ into a value.

\begin{equation}
x_i=Concat_{h}(\sum_{j\ne{i}}\alpha_{ij}^{h}W_{v}^{h}e_{j}), \forall h\in \mathcal{H}
\end{equation}

To evaluate the corresponding attention weight $\alpha_{ij}^{h}$ of agent $j$ to the agent $i$, a bilinear mapping is used to project the embedded features into a query-key system. $W_q^h$ transforms agent $i$'s embedded information $e_i$ into a ``query" and $W_k^h$ transforms agent $j$'s embedded information $e_j$ into a ``key". We then perform Softmax operation to process the similarity value between these two embeddings.  

\begin{equation}
\alpha_{ij}^{h} = \frac{exp\bigg(W_{q}^{h}e_{i} \cdot (W_{k}^{h}e_{j})^{T}\bigg)}{\sum_{r\ne{j}} exp\bigg(W_{q}^{h}e_{i} \cdot (W_{k}^{h}e_{r})^{T}\bigg)}
\end{equation}

Each attention head $h$ uses a separate set of parameters $(W_{q}^h, W_k^h, W_v^h)$ to process the embedded information, and calculate the contribution from all other agents to the current agent. The aggregated message from each attention head is then simply concatenated into a single vector.

\subsection{Learning the Shared Attentive Critic}  	  

As for the question of how to update all critics together within an shared attentive critic. Due to the critic parameters are shared across all agents, all critics can be updated together by minimizing a joint regression loss function:

\begin{equation}
L_Q(\psi)=\sum_{i=1}^N E_{(o,a,r,o^{'})\sim{D}}[(Q_i^{\psi}(o,a)-y_i)^2] \\	
\end{equation}
where
\begin{equation}
y_i = r_i+\gamma E_{\alpha^{'}\sim{\pi\bar{\theta}}(o^{'})}[Q_{i}^{\bar{\psi}}(o^{'},a^{'})-\alpha log(\pi_{\bar{\theta_i}}(a_i^{'}|o_i^{'}))]
\end{equation}
where $Q_i^{\psi}$ is the estimate action-value for agent $i$, while $y_{i}$ is the ground-truth value. $\bar{\psi}$ and $\bar{\theta}$ are respectively the parameters of target critics and target policies. $\alpha$ is used to balance the entropy and rewards. So each agent's policy is updated with the following gradient:

\begin{equation}
\begin{aligned}
\bigtriangledown_{\theta_i}J(\pi_\theta)&=E_{\alpha\sim\pi_\theta}[\bigtriangledown_{\theta_i}log(\pi_{\theta_{i}}(a_i|o_i))(\alpha log(\pi_{\theta_{i}}(a_i|o_i))\\
&-Q_i^{\psi}(o,a)+b(o,a_{\backslash i})]
\end{aligned}
\end{equation}

We represent the set of all agents except $i$ as $\backslash i$. $b(o,a_{\backslash i})$ is the multi-agent baseline, which is the average action value of all agents:

\begin{equation}
\begin{aligned}
b(o,a_{\backslash i})=E_{a_{i}\sim{\pi_{i}(o_{i})}}[Q_{i}^{\psi}(o, (a_{i}, a_{\backslash i}))]
=\sum_{a_{i}^{'}\in{A_{i}}}\pi(a_{i}^{'}|o_{i})Q_{i}(o,(a_{i}^{'},a_{\backslash i}))
\end{aligned}
\end{equation}

The baseline can assist each agent judge its own contribution to the team in a cooperation scenario. By comparing agent's Q-value with the average action value, the certain contribution of agent to the reward value can be found. Full training details and hyperparameters can be found in the following subsection. 


\subsection{Dynamic Environment Modeling} 

Rather than modeling the environment using occupying grids, we instead model the two-dimensional world in continuous space. Consequently, a robot can move to arbitrary positions on the map rather than only positions on the grids surrounding it , which represents a more flexible and practical approach to the multi-robot exploration task. The basic idea behind our approach is to represent each agent as the center point of a circle, so that the agent can explore the area within the radius of this circle at each time step.

We make some definitions first: $o_{i}^t$ stands for the observation of robot $i$ at time step $t$, which is indexed by $i\in{1,..., N}$;  $a_{i}^t$ stands for the output actions given the corresponding inputs; $x_{i}^t$ is the contribution from other agents, a weighted sum of the attention weight and the embedded information of other agents. $t_{i}^t$ stands for the coordinates of agent's history trajectory. $\pi$($a_{i}^t|o_i^t,t_{i}^t,x_{i}^t)$
stands for the policy of choosing controls based on the past observations, trajectories and contribution from other agents.

We assume that there is an underlying map $M = f(S_{tra})$ primarily unknown to the agents. To be concrete, $S_{tra} = \left\{ i, t, x_i, y_i\right\}$ is a dynamic coordinate set which records the position of robot $i$ at time step $t$. Each robot wishes to infer its belief map over map $M$ at time $t$ given all its previous  observations, trajectories and other robots' weighted contribution leading up to that time step. To simplify the problem, we assume the individual map for each agent, indexed as $m_{i}$, are independent:

\begin{equation}
b_{t}(m)=\prod_{i}p(m_{i}|o_{1:t},t_{1:t},x_{1:t})=\prod_{i}b_{t}(m_{i})
\end{equation}

In information gain approaches, the goal of exploration is twofold - not just to map the environment but to move the robot to maximize the amount of new information in the environment. We apply an information gain method to measure the environment uncertainty in a probability distribution $b_{t}(m)$ by the entropy $H(b_{t}(m))$

\begin{equation}
H(b_{t}(m))=\int H(b_{t}(m_{i}))\, dm_{i} = \int b_{t}(m_{i})\log b_{t}(m_{i})\, dm_{i}
\end{equation}

This is a measure of the uncertainty associated with the constructed belief map $m_i$. As $b_{t}(m)$ becomes more peaked $H(b_{t}(m))$ decreases, and $H(b_{t}(m))$ reaches zero when the outcome of a random trial is certain.  	

\subsection{Entropy-oriented Reward Function}

Now, we describe our reward functions which encourage the agents to explore more unknown dynamic environment in the shortest time. In the learning process, there is a central node that records the trajectory of each agent and gives the corresponding reward based on their performances. At the time step $t$, the agent obtains its own observation $o_{i}^t$ and the contribution from other agents $x_{i}^t$. The agent is likely to execute the action with highest reward $a_{i}^t$ and updates its belief map $b_t(m_i)$ based on the obtained inputs and its history trajectory $t_{i}^t$. 

To describe the reward function accurately, moreover, we first illustrate our expectations of the agents in the exploration tasks. Each agent is expected to avoid collisions with other agents and obstacles in the environment, avoid exploring the same area repeatedly, explore the map in minimal time steps, and reduce the uncertainty for the whole map as soon as possible. In other words, the tasks we encourage agents to do are rewarded positively, while behavior we wish the agents to avoid is rewarded negatively . So at the time step $t$, each agent seeks a policy $\pi(a|o,t,x)$ that could reach the expected goals. Reward function $R_{t}$ is as follows:

\begin{equation}
R_{t} = H(b_t)-H(b_{t+1})+ C^tr_{coll}+F(S_{inter}^t)
\end{equation}

Here, $R_{t}$ is the combination of three aspects: the environment's reduced entropy, collision information, and the repeated area coverage information. $H(b_t)-H(b_{t+1})$ is the information gain after agent taking an action, which is defined to be the decrease in entropy. In the context of robotic exploration, we measure the information gain with the difference value of the map entropy between time step $t$ and $t+1$. It is the value that we wish to maximize by selecting new poses. As for the collisions with other agents and obstacles, $C^t$ refers to the number of collisions. Two agents collide if the circle centered on their coordinates coincides. A collision incurs a negative reward $r_{coll} = -10$. 

$F(S_{inter}^t)$ is applied to calculate the repeated area coverage information, which is a piecewise function of agents' intersection area $S_{inter}$. We assume that the environment is a square while the agent is represented as a circle. For the problem of fully cover a square with a minimum amount of radius circles, there is no known way to find optimal solutions. However, in our case, agents are not required to fully cover the environment but are expected to achieve the maximum coverage ratio. So, we propose a theorem on coverage ratio and design a calculate method to achieve this goal (shown in Appendix A). 

We have proved that there is a better arrangement of circles to achieve a higher cover ratio. As a result, the piecewise function $F(S_{inter})$ reaches its maximum when $S_{inter}$ equals the intersection area of two circles in the second circumstance, where there is a higher coverage ratio (the second circumstance shown in  Fig. A10 in Appendix A). $F(S_{inter})$ is designed to be a continuous function, and agents could receive reward signals in the whole exploration process. So, the sparse reward problem can be avoided, agents are able to learn the exploration task better. 

\begin{equation}
\begin{split}
F(S_{inter})=\left\{
\begin{aligned}
&\frac{r_{2}-r_{1}}{S_{2}}\cdot S_{inter} + r_{1} , {0 \leq S_{inter}< S_{2}}\\
&\frac{r_{2}-r_{3}}{(S_{2}-S{3})^2}\cdot (S_{inter}-S{3})^2+r_{3} , {S_{2} \leq S_{inter} \leq S_{3}}
\end{aligned} \right.
\end{split}
\end{equation}

where, 

\begin{equation}
\left\{
\begin{aligned}
&S_{2} =  4\cdot S_{PQB} = (\pi-2) r^2 \\		
&S_{3} = S_{circle} = \pi r^2
\end{aligned}		
\right.
\end{equation}

\begin{figure}[!t]
	\centering
	\includegraphics[width=0.7\textwidth]{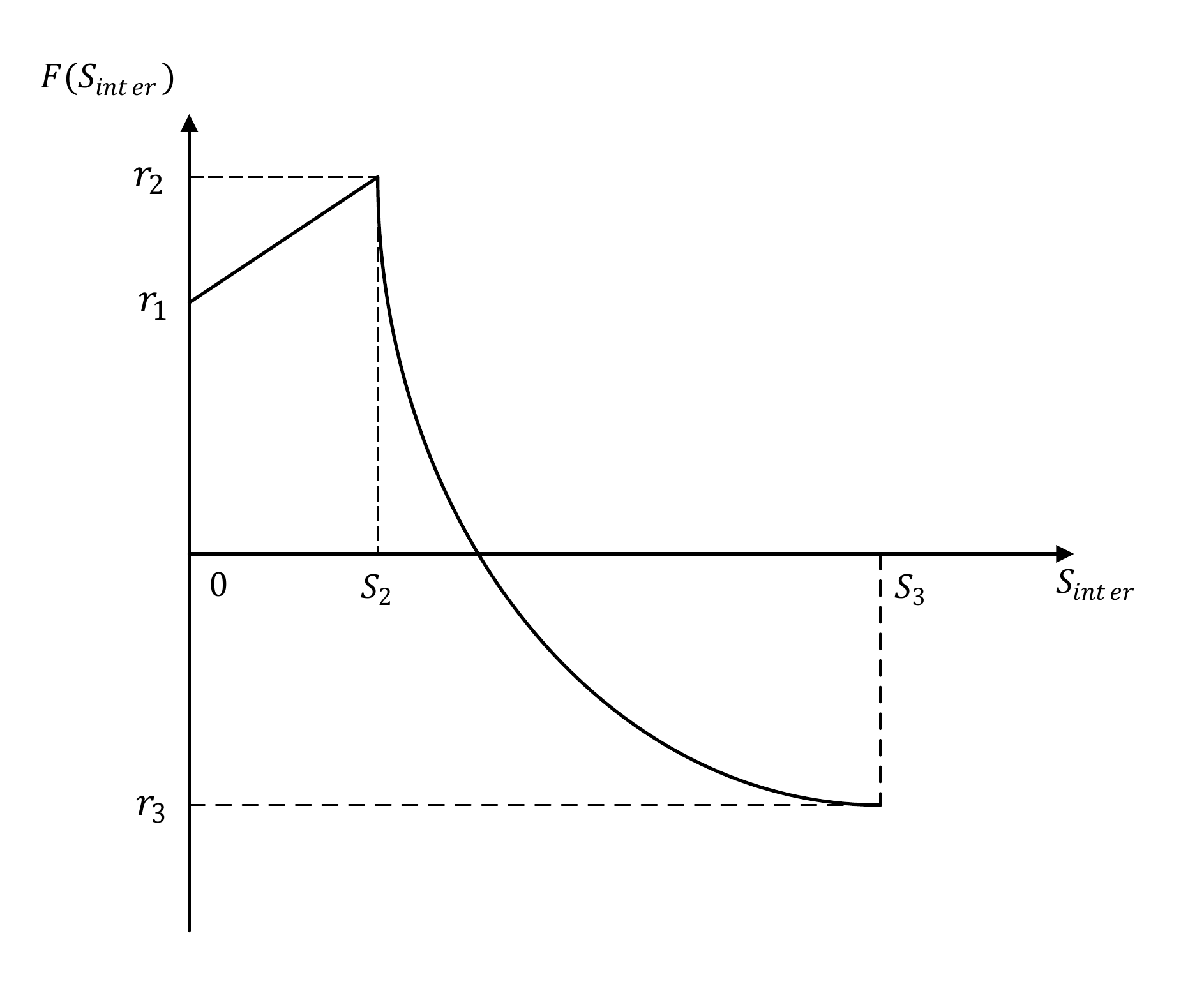}	
	\caption{The piecewise function $F(S_{inter})$ that calculates the repeated area coverage information based on the agents' intersection area $S_{inter}$.}
	\label{fig.2}
\end{figure}

When circles tangent to each other, the coverage ratio may not lead to the maximum while the agent explores new areas in every time step, so the reward $r_{1}=2$. With the increase of two circles' intersection area, the value of reward increases and reaches its maximum $r_{2}=3$ at $S_{inter} = S_{2}$. Then $F(S_{inter})$ falls in the form of a quadratic function and reaches bottom $r_{3}=-3$ when two circles coincide completely.

\subsection{Exploration Rate-based Training Approach}

To simulate the dynamic obstacles in the practical environment and to enhance the robustness of agents to dynamic settings, we gradually add $n$ random obstacles every $m$ time-steps to the original environment. Thus, each agent is expected to learn the strategy to avoid both static and dynamic obstacles. To ensure the model find better local optimum and accelerate the training speed, curriculum learning is adapted to the training process by gradually increasing task difficulty. In detail, the value of $m$ decreases during the training time so that the frequency of adding random obstacles increases, which means the difficulty of the mission is increasing. However, the value of $\frac{n}{m}$ keeps constant, which means the number of random obstacles $n$ also decreases when they are added more frequently. It is an essential setting due to the value of $el_{rate}$ (a crucial component to measure the success standard) is fixed in the given time-steps. 

Each simulation is terminated after a specified number of time-steps and classified as a failure if collisions with obstacles have occurred or the exploration rate $el_{rate}$ is less than $90\%$. $el_{rate}$ here is calculated as follows:

\begin{equation}
el_{rate}=\frac{S_{explored}\bigcup S_{obstacles}}{S_{map}}
\end{equation}

Here, $S_{explored}$ is the explored area in the map, which is the union area of agents' trajectory in each time step. $S_{obstacles}$ is the subset of final obstacles, including the static and random obstacles in the map. $S_{map}$ means the total area of the environment and is modeled as a square in this scenario. Since each agent and obstacle is represented as a circle with a certain radius and the positions of newly generated circles may overlap with the area that has been explored by the multi-agent system, we take the operation $\bigcup$ to calculate the union area of these circles. 

We use Simpson adaptive algorithm, a classic computational geometry method to calculate the union area of the circles. We first judge the position between circles to optimize calculation. If the centers of the circles coincide, then only the area of one circle is retained; Or if the distance from the center of a circle to any other center exceeds the radius, then we add the area of a complete circle to the total area. After such screening, we use the Simpson adaptive algorithm to calculate the area corresponding to each arc. We first randomly segment an arc, for each interval $\left[l,r\right]$, we recursively calculate the values corresponding to the endpoints and the intermediate point $f(l), f(r), f(mid)$. $f(i)$ is taken as the total length of the transversal lines of $x=i$ and all the circles, so the area between the interval (l, r) is:

\begin{equation}
\int_{l}^{r}f(x) = \frac{r-l}{6}(f(l)+4f(mid)+f(r))
\end{equation}


\section{Experiments}
\label{sec:experiments}

In this section, we will first introduce our experimental settings and locations storage. Then, we will show the training performance compared with the baseline methods. Finally, we will give the attention visualization and the corresponding analyze.

\subsection{Experiment Set Up}

We use MPE (Multi-agent Particle Environment) framework to construct an environment to test various capabilities of our approach (MAMECS) and baselines. The square map of size $1\times1$ represents an artificial environment with various obstacles, which can satisfy the amount of exploration needed to test our method, but not too large to cause inadequate exploration. The experimental environment has continuous action space, so the agent can move to anywhere on the map determined by its velocity and acceleration parameters. Each agent can sense the environment information within the exploration radius of $r=0.04$ and has a communication range covering the whole environment. The goal for the whole system is to explore the map as much as possible in a fixed time.

To be concrete, four agents enter the environment through four arrival points and the positions traveled by each agent form a trajectory, represented as red circles within the same radius of agent $r=0.04$. As for the obstacles, there are 4 original blocks in the prime environment and new blocks are introduced according to a uniform random distribution across the search space. The size of the obstacles is the same as that of the agent and the obstacles will stay on the map until the end of the episode. For the number of agents, two new agents enter the map randomly from the four arrival points every 4 time-steps. However, the total number of agents at a given time is limited to $N_{max}=16$. Each agent has a life cycle of 60 time-steps and is encouraged not to collide with other agents and obstacles as well as to keep inside the map.  

\begin{figure}[!t]
	\centering
	\includegraphics[width=4.3in]{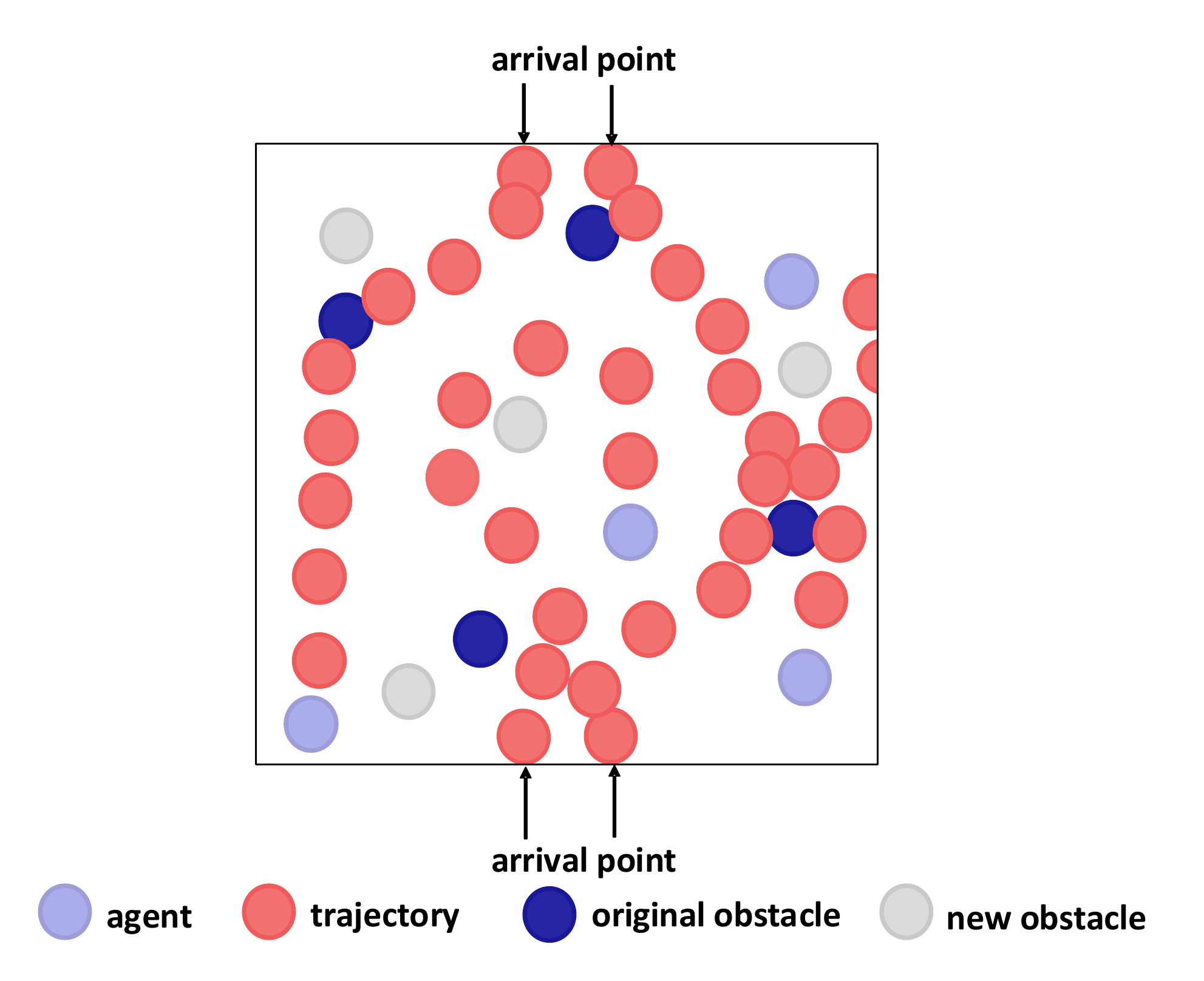}
	\caption{The experimental environment in continuous space which has dynamic number of blocks.}
	\label{fig.3}
\end{figure}


\subsection{Storage of Location}

Agents' location information is a set of points in two-dimensional space, so we build a 2d tree to record and process these coordinates. A K-D tree is a space partitioning data structure for organizing points in a K-Dimensional space, k is 2 in our two-dimensional environment. Each leaf node in the binary tree is a 2-dimensional point and every non-leaf node can be thought of generating a splitting hyperplane that divides the space into two parts. Points to the left of this hyperplane are represented by the left subtree of that node and points to the right of the hyperplane are represented by the right subtree. 

We first use the initial agents' location in the environment to construct a balanced 2d tree. The feature with the largest variance is selected as the segmentation feature, so the segmented data will be relatively scattered. Then, we select the median of the feature as the segmentation point, thus the number of nodes in the left subtree and right subtree is approximately the same, which is convenient for binary search. When the agent moves to a new position, the coordinate will be recorded and added to the 2d tree. As for adding elements, we traverse the tree from the root node and move to either the left or the right child depending on which side of the node's splitting plane contains the new node. When the agent leaves the environment due to the battery problem, its position will also be removed from the 2d tree. 

To calculate the intersection area with different agents, we need to obtain the coordinates of its surrounding agents. This question can be thought of as the range search problem in the 2d tree. To find all points contained in a given query rectangle, which is centered on the coordinate of the current agent and the diameter of an agent's exploration range is used as a side length. We start at the root and recursively search for points in both subtrees using the following pruning rule: if the query rectangle does not intersect the rectangle corresponding to a node, there is no need to explore that node (or its subtrees). That is, search a subtree only if it might contain a point contained in the query rectangle.

\subsection{Training Performance}

As for our training procedure, we use an off-policy, actor-critic method Soft Actor-Critic for maximum entropy reinforcement learning in the training progress of 40000 episodes. There are 12 threads to process training data in parallel and a replay buffer to store experience tuples of $(o_{t}, a_{t}, r_{t}, o{t+1})_{1...N}$ for each time step. The environment gets reset every episode of 60 steps. The policy network and the attention critic network get updated 4 times after the first episode. In detail, we sample 1024 tuples from the replay buffer and update the parameters of the Q-function loss and the policy objective through policy gradients. Adam optimizer is used and the learning rate is set as 0.001. We use a discount factor $\gamma$ of 0.99 and 0.2 as our temperature setting for Soft Actor-Critic. The embedded information uses a hidden dimension of 128, and 4 attention heads are used in our attention critics. 

We compare our method MAMECS to two recently proposed approaches: MADDPG \cite{lowe2017multi} and COMA \cite{foerster2018counterfactual}, in the exploration task for each agent. MADDPG extends the traditional actor-critic methods for multi-agent mixed cooperative-competitive environments and becomes a common baseline method in various multi-agent scenarios. Unlike MADDPG, COMA uses a centralized critic to estimate the Q-function and decentralized actors to optimize the agents' policies. All methods have approximately the same number of parameters across agents, and each model is trained with 6 random seeds each. Hyperparameters for each underlying algorithm are tuned based on performance and kept constant across all variants of critic architectures for that algorithm.

\begin{figure}[!t]
	\centering
	\includegraphics[width=4in]{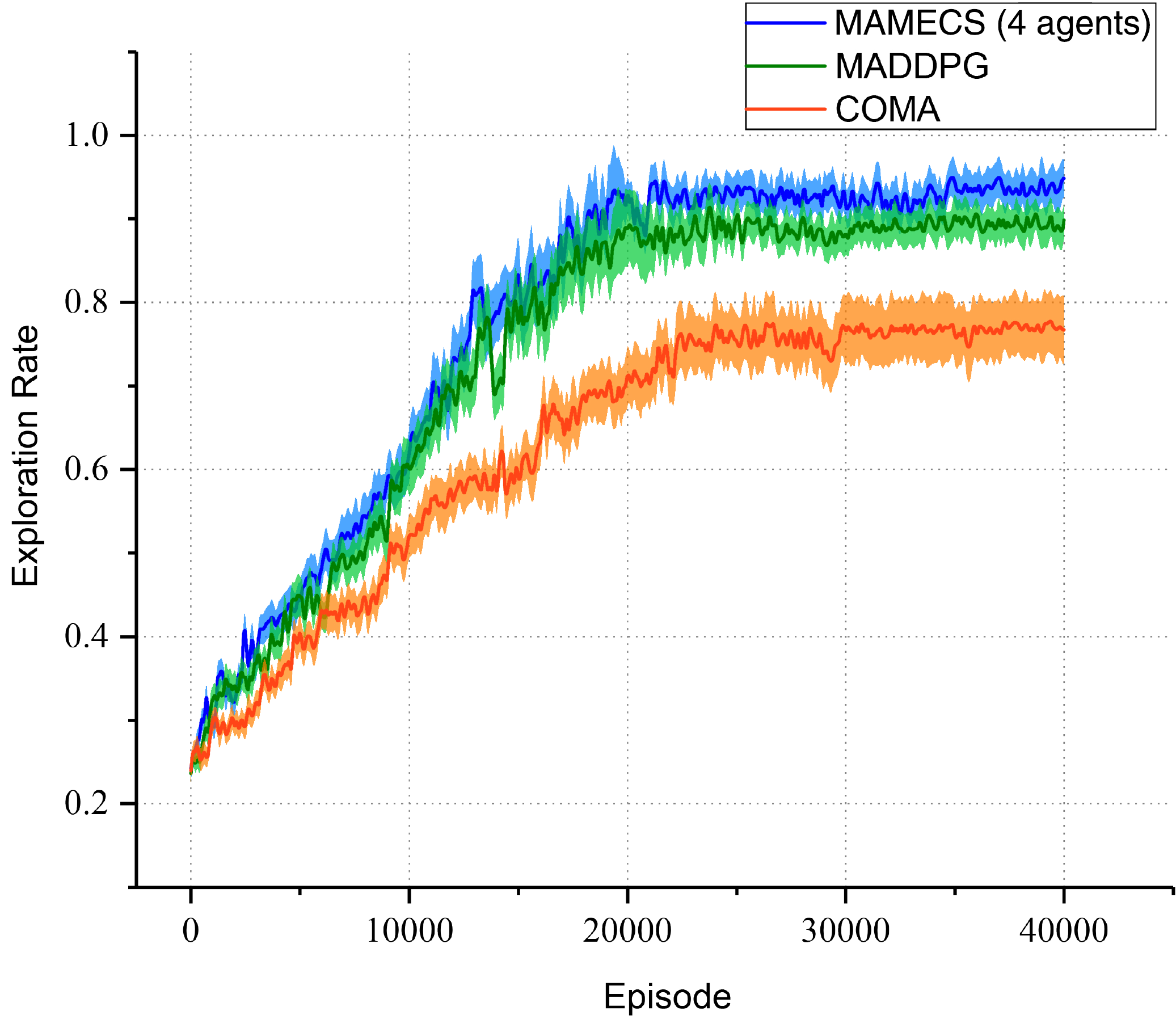}
	\caption{Exploration rate of 4 agents on MAMECS and baselines.}
	\label{fig.3}
\end{figure}

The performance of each approach is assessed by the average exploration rate in each episode. As shown in Fig. 4, MAMECS outperforms MADDPG AND COMA in the exploration rate and respectively reaches 94.65\%, 91.52\%, and 77.78\%. This indicates that MAMECS has a better learning ability in the exploration task, which contributes to the capability of focusing other agents' relevant information determined by the attention heads. To be concrete, although MADDPG takes other agents' observations as input, MADDPG does not weight the information differently.  COMA uses a single centralized critic network for all agents which may perform best in environments with global rewards and agents with similar action spaces. However, our environments have agents facing completely independent situations of different rewards. 

\begin{figure}[!t]
	\centering
	\includegraphics[width=4in]{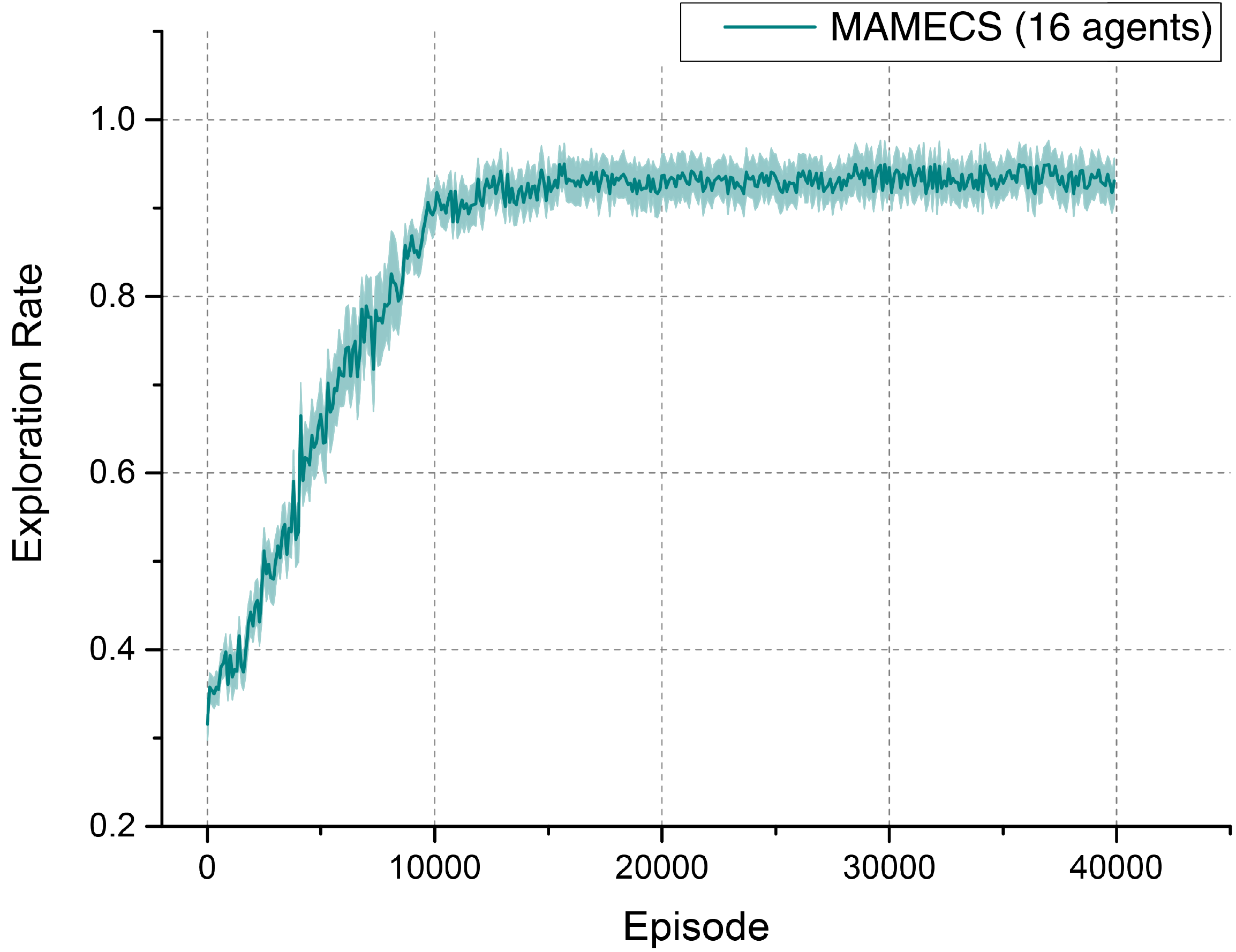}
	\caption{Exploration rate of 16 agents on MAMECS. Error bar are a 95\% confidence interval across 6 runs.}
	\label{fig.3}
\end{figure}

\begin{figure*}[hbtp]
	\centering
	\subfigure[]{
		\begin{minipage}{2.55in}
			\centering
			\includegraphics[width=2.55in,height=1.9in]{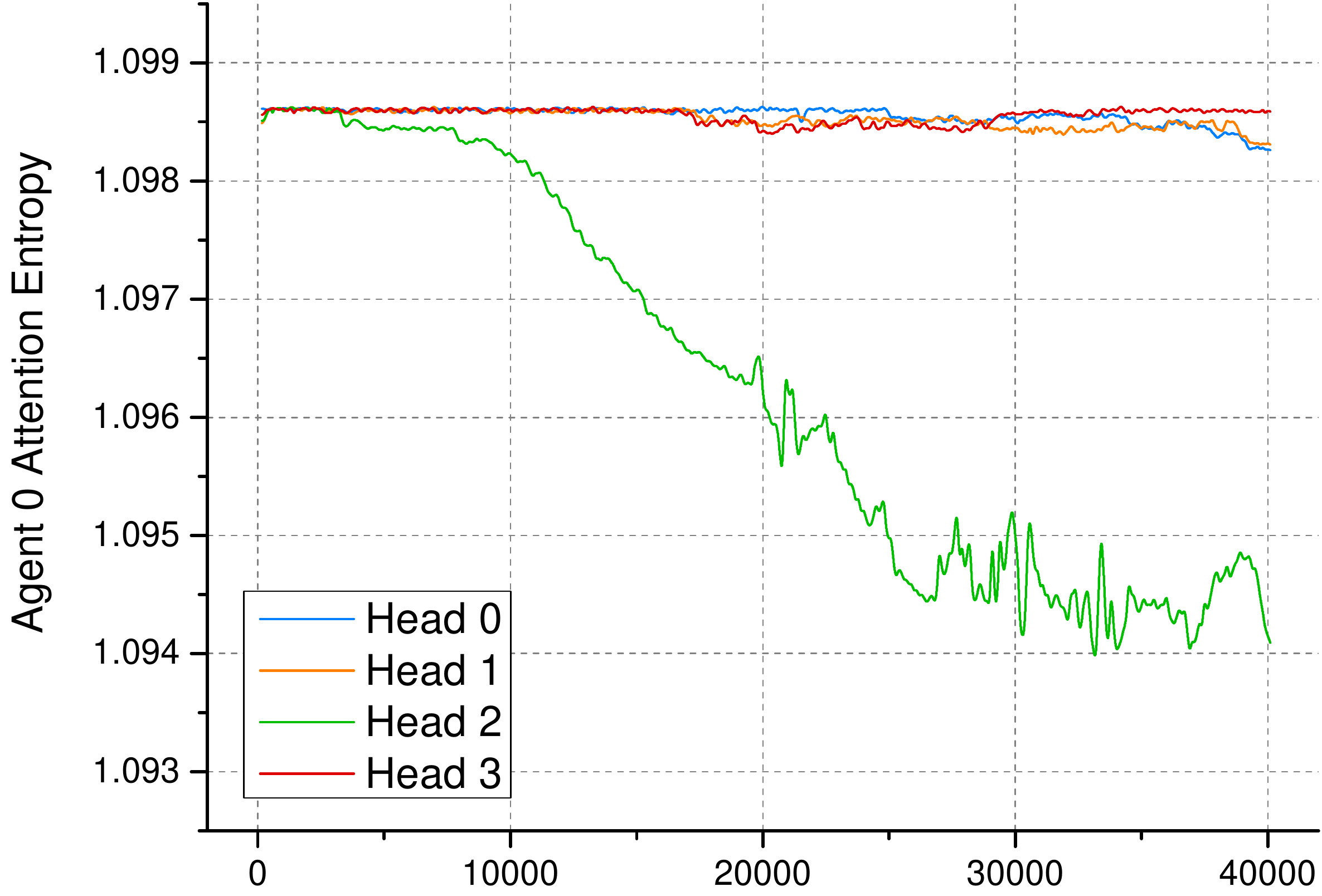}	
		\end{minipage}
	}
	\subfigure[]{
		\begin{minipage}{2.55in}
			\centering
			\includegraphics[width=2.55in,height=1.9in]{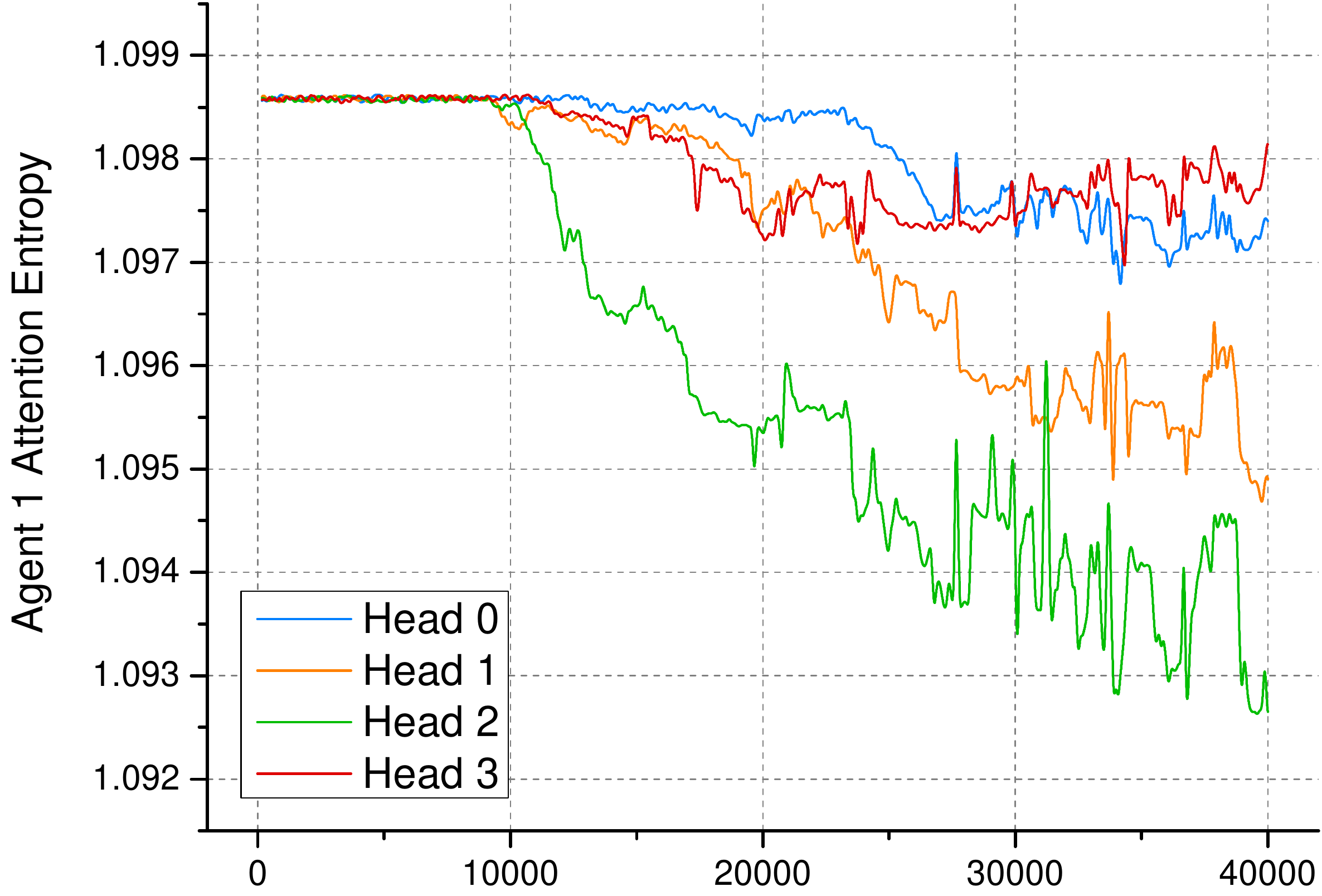}	
		\end{minipage}
	}
	\subfigure[]{
		\begin{minipage}{2.55in}
			\centering
			\includegraphics[width=2.55in,height=1.9in]{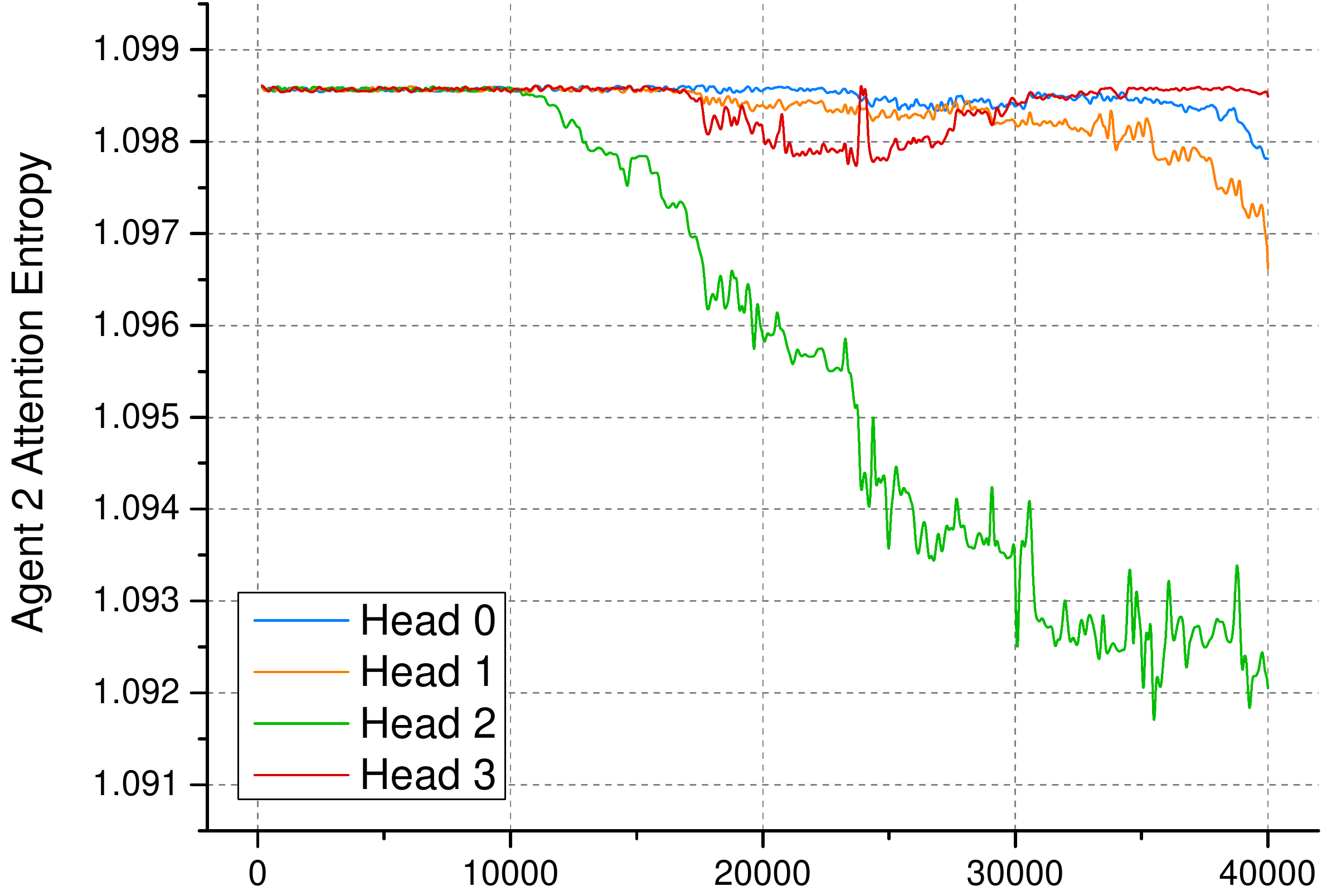}	
		\end{minipage}
	}
	\subfigure[]{
		\begin{minipage}{2.55in}
			\centering
			\includegraphics[width=2.55in,height=1.9in]{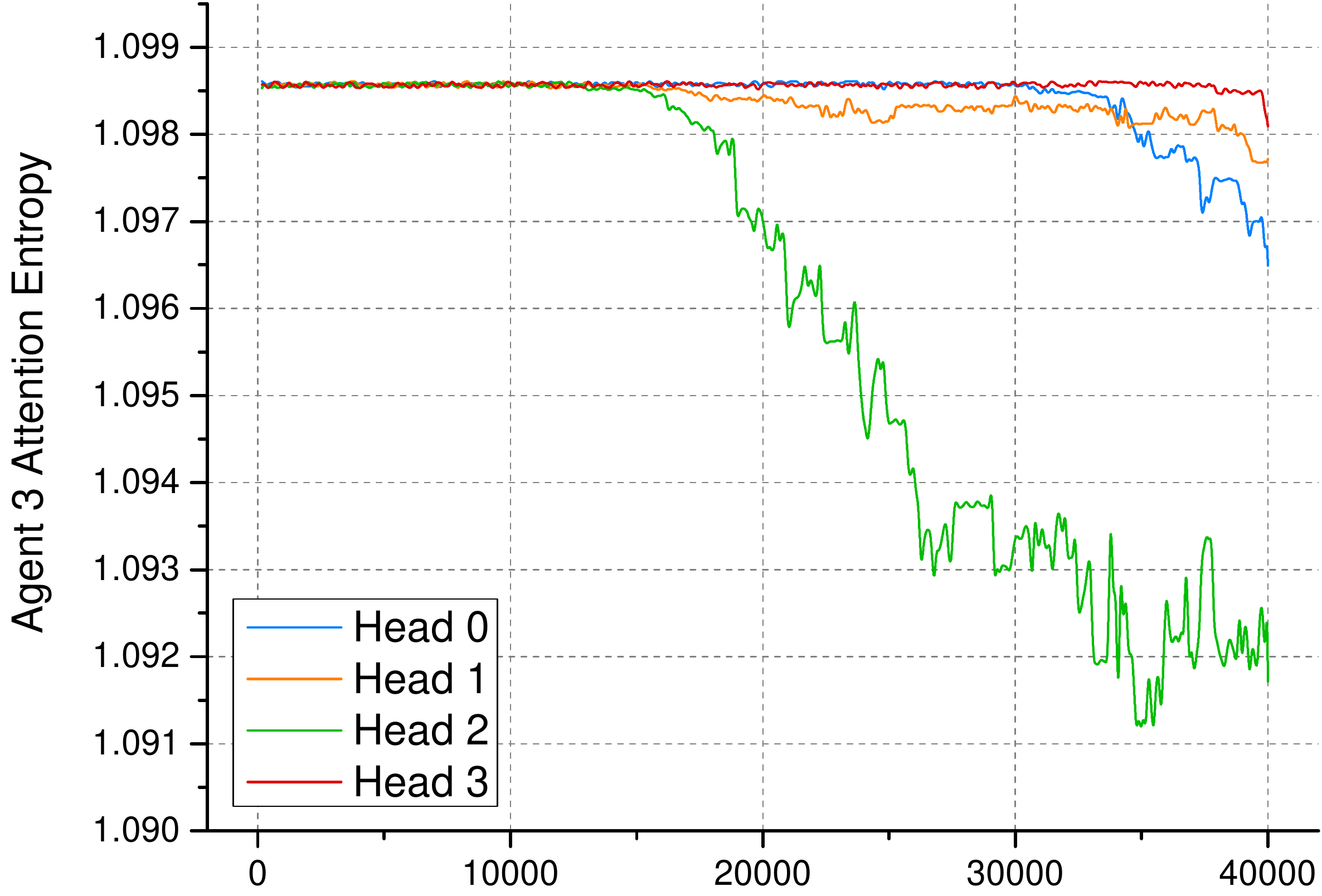}	
		\end{minipage}
	}	
	\caption{Attention ``entropy'' for each head over the course of training for the four agents in the multi-robot exploration environment}	
\end{figure*}

Due to the action space size increasing exponentially with the number of agents in MADDPG and COMA, the exploration task for 16 agents is not trainable. However, MAMECS only focus on the relevant information from other agents, which is equivalent to pruning the space to linearly increasing with the number of agents. Thus, the exploration task could extend to 16 agents through our approach. Meanwhile, exploration rate converges faster when more agents get involved (shown in Fig. 5). 

After training, we evaluate MAMECS, MADDPG, and COMA by running 1000 episodes and compare the number of collisions, the exploration rate and the average rewards at the end of each episode. As shown in Table 1, MAMECS outperforms other methods in all aspects. MAMECS not only increase the exploration ratio by 2.83\% than MADDPG but also reduce the collisions during the exploration process. Meanwhile, MAMECS has a higher reward which means better performance in the exploration task.

\begin{table}[htbp] 
	\centering
	\caption{\label{tab:test}Average performance over 1000 out-of-sample episodes in 60 time-steps.} 
	\begin{tabular}{lccc} 
		\toprule 
		\textbf{Approach} & \textbf{Collisions} & \textbf{Exploration-Rate} ($\%$) & \textbf{Average-Rewards}\\ 
		\midrule 
		MAMECS & \textbf{29 $\pm$ 12} & \textbf{93.25 $\pm$ 2.47}  & \textbf{63.61 $\pm$ 4.92}\\
		MADDPG & 37 $\pm$ 18 & 90.42 $\pm$ 2.53 & 55.21 $\pm$ 4.79\\ 
		COMA & 71 $\pm$ 16 & 76.64 $\pm$ 3.17 & 41.93 $\pm$ 5.87\\
		\bottomrule 
	\end{tabular} 
\end{table}


\subsection{Visualizing Attention}

Furthermore, in order to demonstrate the effect of the attention head on the agent during the training process, we test the “entropy” of the attention weights for each agent for each of the four attention heads that we use in the exploration task (Figures 6 and 7). A lower entropy value indicates that the head is focusing on specific agents, with an entropy of 0 indicating attention focused on one agent. In the exploration task for agents 0, 1, 2 and 3, we plot the attention entropy for each agent. In more detail, each agent tends to use a different combination of these four heads, indicating that each agent uses more than one  attention head in the exploration process, although their use is not mutually exclusive. This different combination of attention heads is appropriate due to the nature of the exploration task.

\begin{figure*}[hbtp]
	\centering
	\subfigure[]{
		\begin{minipage}{2.55in}
			\centering
			\includegraphics[width=2.55in,height=1.9in]{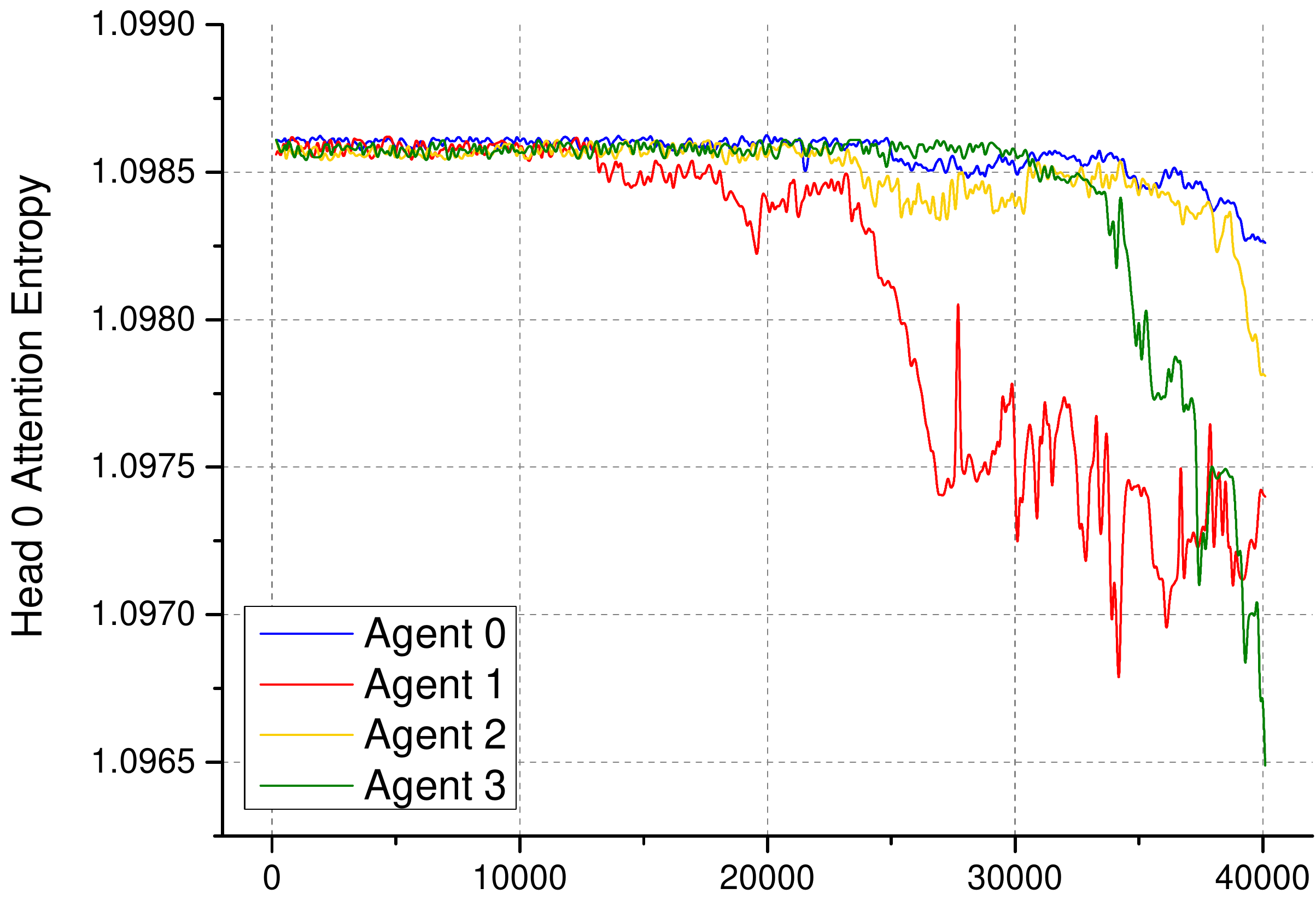}	
		\end{minipage}
	}
	\subfigure[]{
		\begin{minipage}{2.55in}
			\centering
			\includegraphics[width=2.55in,height=1.9in]{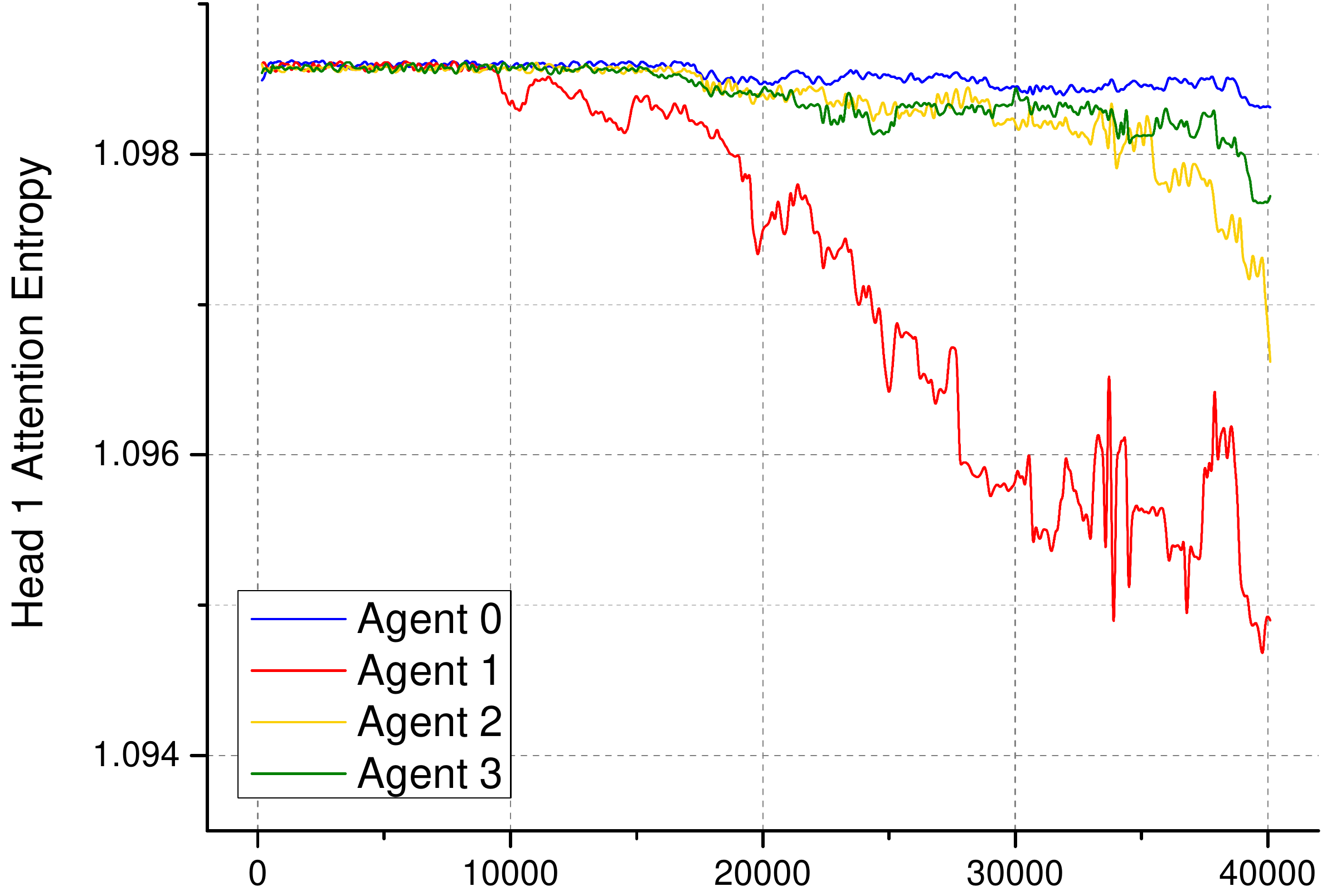}	
		\end{minipage}
	}
	\subfigure[]{
		\begin{minipage}{2.55in}
			\centering
			\includegraphics[width=2.55in,height=1.9in]{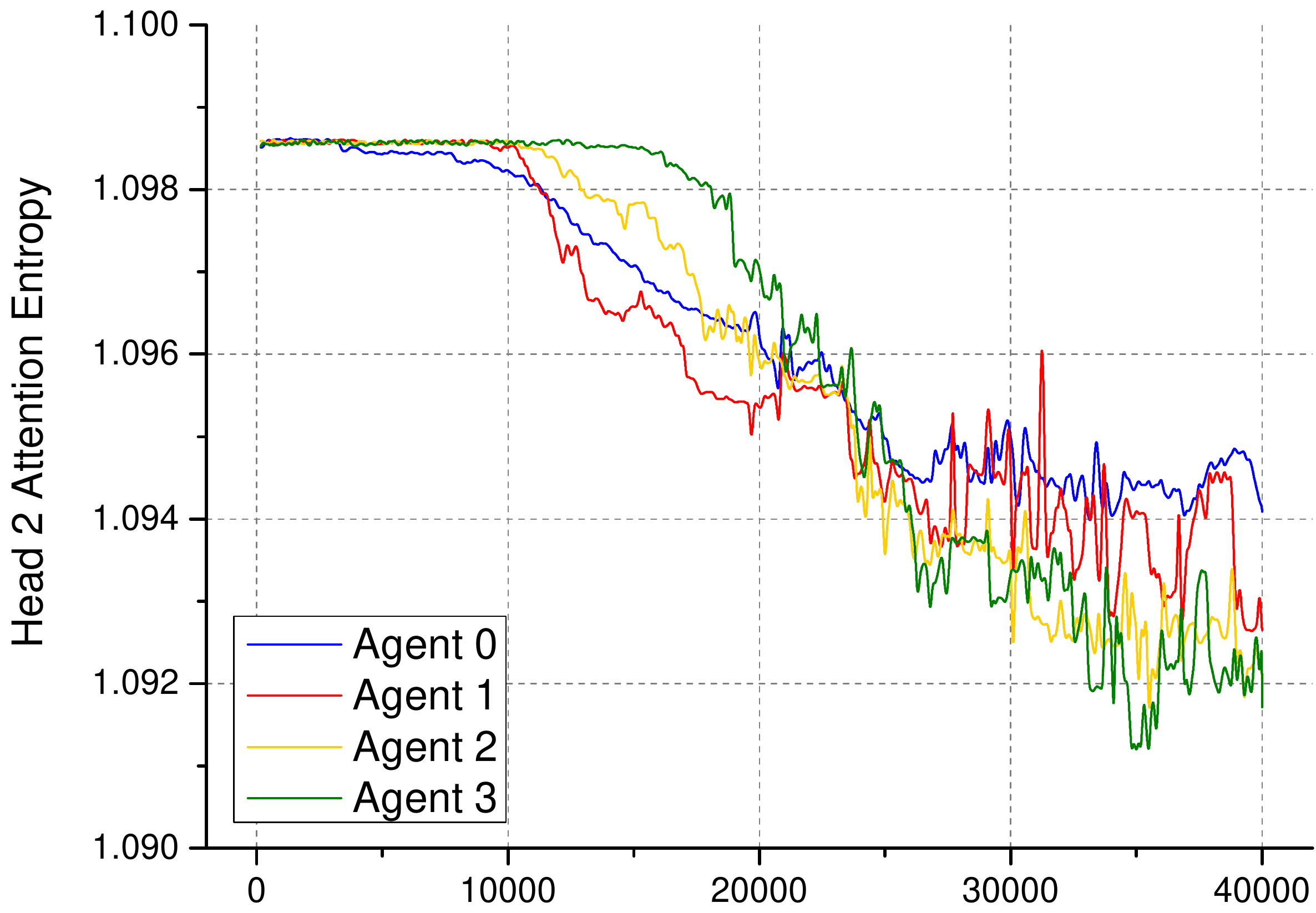}	
		\end{minipage}
	}
	\subfigure[]{
		\begin{minipage}{2.55in}
			\centering
			\includegraphics[width=2.55in,height=1.9in]{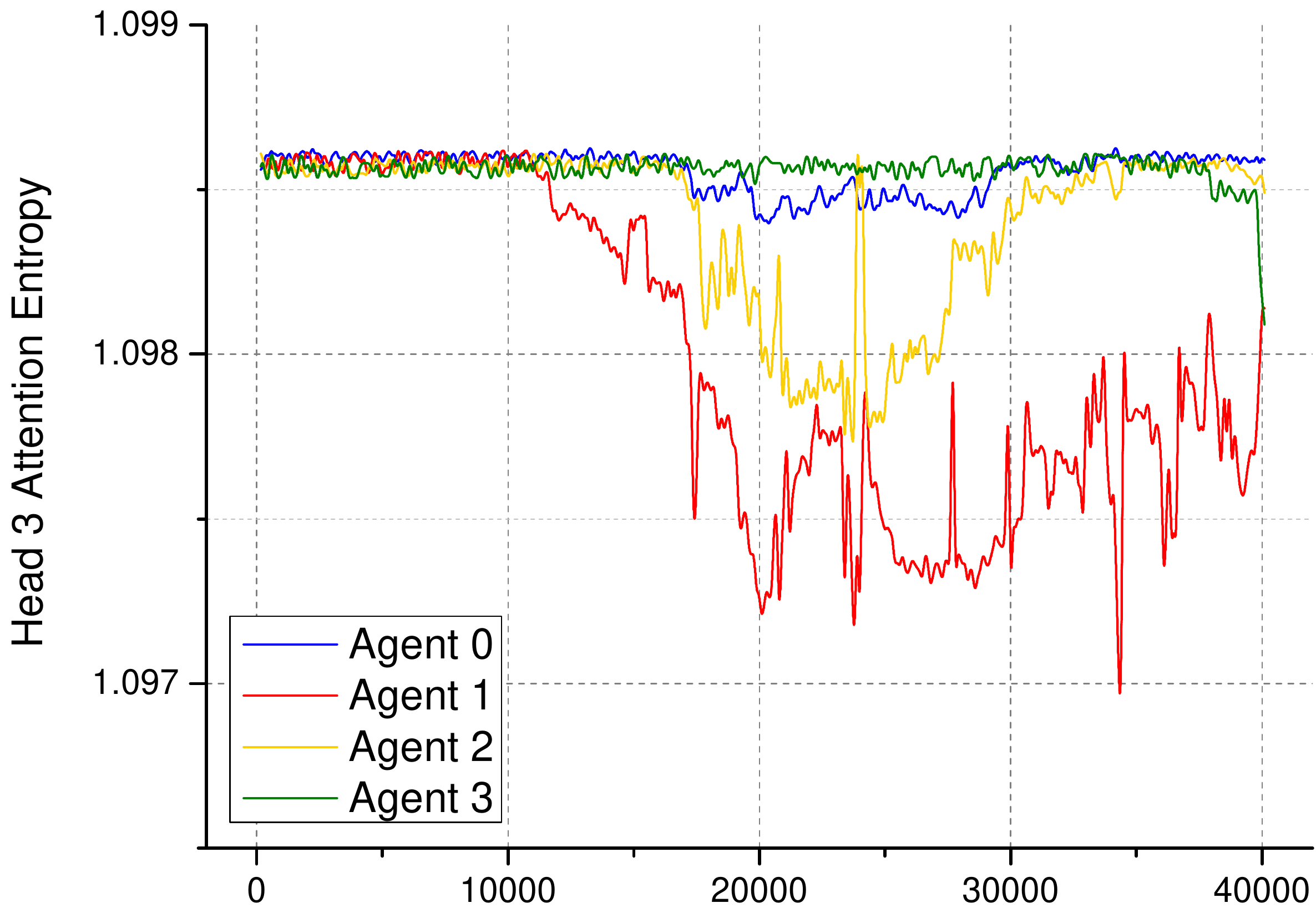}	
		\end{minipage}
	}	
	\caption{Attention ``entropy'' for each head of four agents over the course of training in the multi-agent environment.}	
\end{figure*} 

\begin{figure*}[htbp]
	\begin{minipage}[t]{0.45\linewidth}
		\centering
		\includegraphics[height=5.8cm,width=5.8cm]{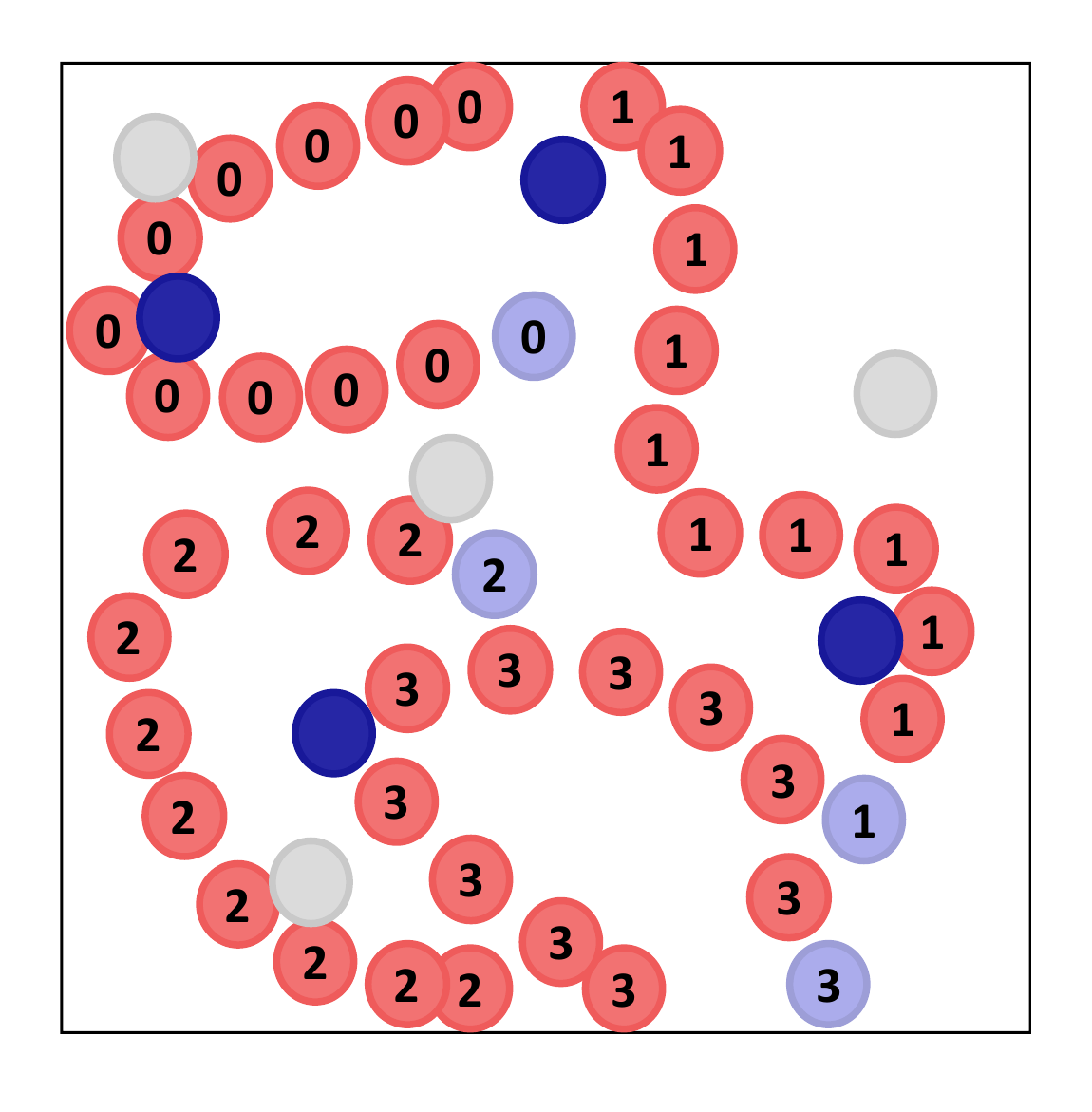}
	\end{minipage}%
	\begin{minipage}[t]{0.5\linewidth}
		\centering
		\includegraphics[height=6cm,width=7cm]{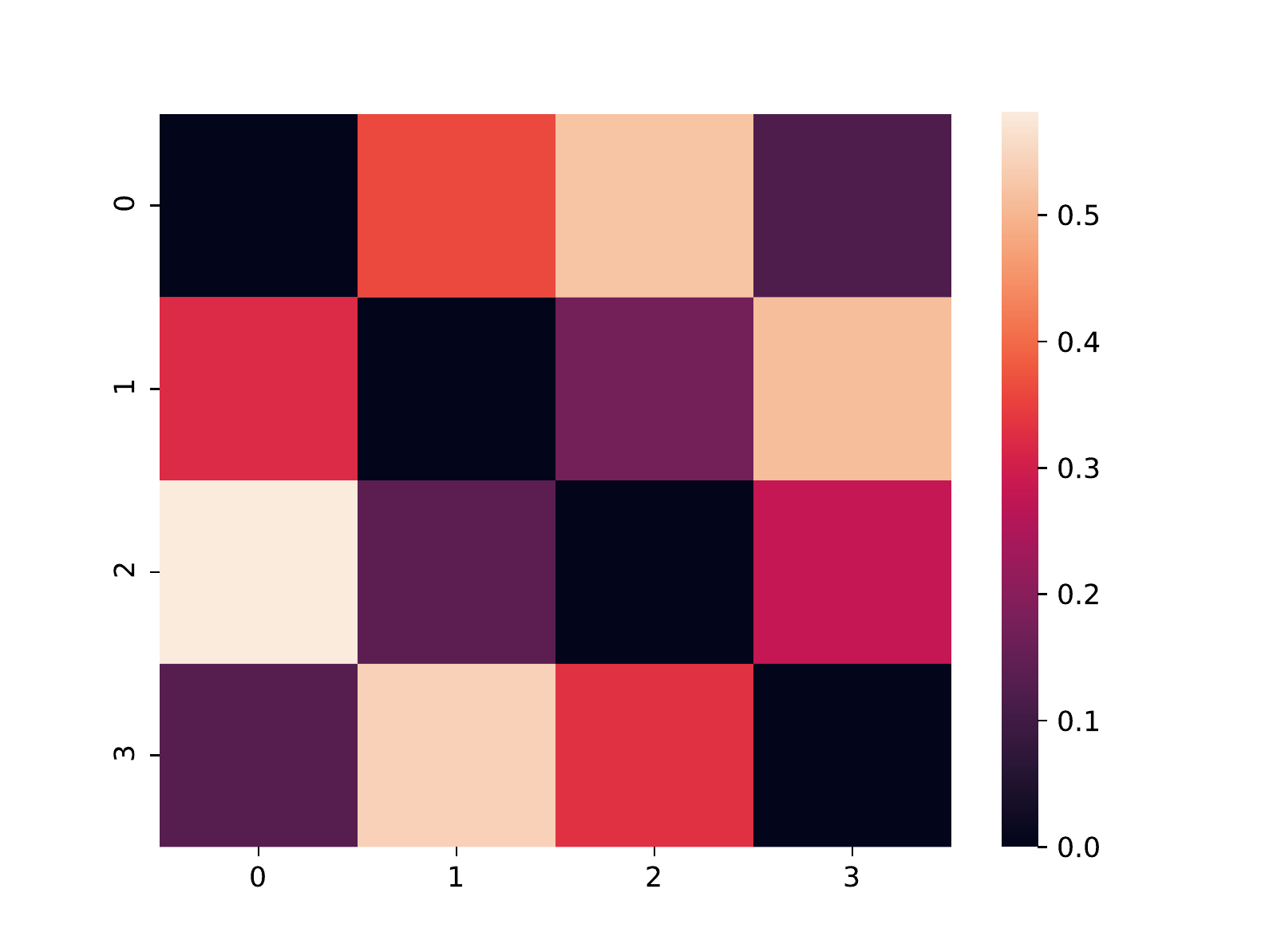}
	\end{minipage}
	\caption{Left: An exploration state of four agents from the last training episode. Right: The corresponding heatmap of attention weight among each agent.}
\end{figure*} 

Since obstacles appear randomly in the training process and the topography distribution of each part of the map is different, each agent faces various difficulties and gets the independent reward at every time step. In addition, each of the four attention heads uses a separate set of parameters to determine an aggregated contribution from all other agents, which means each agent tends to be influenced differently by other agents, so it is reasonable that each agent uses a different combination of four attention heads. 

As shown in figure 6, each agent mostly uses the attention head 2, which indicates that the agents' observation and action information focused by attention head 2 assists more in the exploration task. However, as for agent 1, it needs the main participation of both attention head 2 and head 1 during the training process. As a result, it is obvious that all four attention heads are necessary due to the different concerns about agents' information.

In order to analyze the impact of attention mechanism more, we consider the attention entropy of each attention head for the four agents (Figure 7). Similarly, each head focuses on the different agents at every time step in the training process and focuses a different combination of four agents, which is the same conclusion as the one shown above. It is clear that each head has a different emphasis on agents' observation and action information determined by a specific set of parameters. For instance, head 0, head 1 and head 3 prefer to focus more on the information of agent 1 later in the training phase, while head 2 gives roughly the same concern on all the agents. Besides, each head tends to give a large focus on the information of agent 1, which can also be seen from Figure 6 that all the four heads are used a lot by agent 1.

To investigate the correlation between the attention weight and the state between agents, we further pick a special state from the last training epoch that could explain the optimization ability of the attention mechanism in MAMECS. The exploration state of four agents from the last training episode (left) and the corresponding heatmap of attention weight among each agent (right) is illustrated in Fig. 8. The regions that have higher attention weight are lighter in color, and the sum of attention weight of each agent is 1 due to the normalization.

Generally, there is larger attention weight between agents with closer distance, like agent 0 and 2, agent 1 and 3. However, regarding the agents far from the current agent, whose trajectory area tends to be explored by the current agent obtains the higher weight. To be specific, as for agent 0, the attention weight between agent 2 $0.52$ is higher than that between agent 1 $0.35$, and they are both higher than that between agent 3 $0.13$, which could illustrate the effect of the attention mechanism. Agent 2 is closest to agent 0, which leads to the highest weight. Although agent 0 is far from both agent 1 and agent 3, it is going to explore the trajectory area of agent 1, so agent 0 will pay more attention to the information of agent 1 rather than agent 3. Therefore, our multi attention heads have learned exactly what we expect. 


\section{Conclusion}
\label{sec:conclusion}

This paper proposes an multi-head attention based training policies for multi-robot exploration task, MAMECS. The key idea is to utilize multi-head attention mechanism to select meaningful information between related agents for estimating critics. Evaluations on the task of multi-robot exploration clearly show the model outperforms the recently proposed approaches: MADDPG and COMA. MAMECS can obtain higher average rewards and improve exploration performance. We also analyze the attention weight to illustrate the function of each attention head. 

In our future work, we will compare the performance of MAMECS with other baseline methods in Predator and Prey scenario. Besides, we will increase the number of agents and further highlight the advantage of cooperation ability in multi-agent reinforcement learning systems. 

\section{Acknowledgment} This work was supported by the National Natural Science Foundation of China (Grant Numbers
61751208, 61502510, and 61773390), the Outstanding Natural Science Foundation of Hunan Province (Grant
Number 2017JJ1001), and the Advanced Research Program (No. 41412050202).

\section{Reference}
\label{sec:reference}

\appendix
\section{}
Considering there is no known way to find optimal solutions for the problem of fully cover a square with minimum amount of radius circles, we have proposed a theorem to achieve higher coverage ratio. 

\begin{theorem}
	Arrangement of circles tangent to each other does not necessarily lead to the maximum coverage ratio.
\end{theorem}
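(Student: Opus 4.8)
The plan is to read the statement as a non-optimality claim: to show that tangency does not necessarily maximize the coverage ratio, it is enough to exhibit a single arrangement of equal-radius circles whose coverage ratio is strictly larger. I take the coverage ratio of an arrangement to be the fraction of the square covered by the union of the disks (the same notion as $el_{rate}$), and I compare two arrangements on a common square-lattice unit cell so that the comparison is fair.

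First I would fix the tangent configuration: place the centres on a square lattice with adjacent spacing $2r$, so that neighbouring circles meet in a single point and no two disks overlap. On the $2r \times 2r$ cell the covered part is the inscribed disk of area $\pi r^2$ and the uncovered part consists of the four curvilinear corner gaps of total area $(4-\pi)r^2$; hence the tangent coverage ratio is $\pi r^2/(2r)^2 = \pi/4 \approx 0.785$, strictly below $1$ because of these persistent gaps. Next I would build the overlapping competitor by compressing the lattice to adjacent spacing $r\sqrt{2}$, equivalently making the diagonal neighbours tangent. The centre of each cell then lies at distance exactly $r$ from all four surrounding centres, so every point of the cell is within radius $r$ of some centre, the gap closes, and the cell is covered completely, giving coverage ratio $1 > \pi/4$. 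The supporting area computation is clean: within the compressed cell the four corner quarter-disks contribute $\pi r^2$, the two pairs of opposite circles are tangent at the cell centre so there are no triple overlaps of positive area, and the only overlaps are the four adjacent half-lenses, which total exactly $S_2=(\pi-2)r^2$; subtracting gives $\pi r^2-(\pi-2)r^2 = 2r^2$, the cell area, which both confirms full coverage and ties the construction to the peak of $F(S_{inter})$ at $S_2 = 4 S_{PQB}$.

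The main obstacle is conceptual rather than computational, and I expect the normalization to be the delicate step. Reported as a bare fraction-covered, any denser arrangement trivially covers more, so the substantive statement is that the improvement survives per unit circle budget: zero overlap (tangency) leaves gaps, full coincidence ($S_{inter}=S_3=\pi r^2$) wastes the entire disk, and the useful overlap is the interior value $S_2$ that closes the gaps while wasting the least --- precisely the interior-peak, decreasing-tail shape of $F$. To make this rigorous I would fix a common footprint for the two lattices (or equivalently a common circle count) and compare the newly covered area gained by overlap against the area lost to it, using the $S_{PQB}$ decomposition of Fig. A10; because opposite circles are tangent the inclusion--exclusion collapses to the four adjacent lenses, so the strict inequality against the tangent value $\pi/4$ follows, establishing that tangency is simply not forced to be optimal.
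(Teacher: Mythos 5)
Your proposal is correct and reaches the same conclusion by a closely related but genuinely different accounting. The paper likewise proves the theorem by exhibiting an overlapping arrangement that beats the tangent one, and it uses the very same overlap geometry you do (adjacent centres at distance $r\sqrt{2}$, so that $\angle PO_{2}B=\pi/2$ and the total lens area is $S_{2}=(\pi-2)r^{2}$); but it works with a \emph{finite} configuration of four circles and takes the denominator to be the circumscribed square of that configuration, of side $2r+r/\cos\alpha$, obtaining $P_{c2}=\frac{4+2(\pi-2)\cos^{2}\alpha}{(1+2\cos\alpha)^{2}}\approx 0.8822$ against $P_{c1}=\pi/4\approx 0.7854$ for four mutually tangent circles in their $4r\times 4r$ circumscribed square. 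You instead pass to the periodic square lattice and normalize by the unit cell, which buys you two things: the overlapping arrangement achieves coverage ratio exactly $1$ rather than $0.8822$, and the inclusion--exclusion identity $\pi r^{2}-(\pi-2)r^{2}=2r^{2}$ simultaneously certifies full coverage and exhibits $S_{2}$ as precisely the overlap needed to close the corner gaps, tying the counterexample directly to the peak of the reward function $F(S_{inter})$. What the paper's version buys in exchange is a self-contained finite example in which the number of circles is manifestly held fixed, so the ``denser arrangements trivially cover more'' objection never arises.

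One caution on your closing paragraph: the ``per unit circle budget'' framing, taken literally, would work against you. In the compressed lattice each circle accounts for only $2r^{2}$ of newly covered area versus $\pi r^{2}$ in the tangent lattice, so area covered \emph{per circle} is in fact maximized by non-overlapping arrangements, and the claimed strict improvement would not survive that normalization. The theorem does not require this stronger statement; it concerns only the coverage ratio (the covered fraction of the reference square or cell), and for that your unit-cell comparison $1>\pi/4$ already suffices. I would drop the per-circle normalization rather than attempt to make it rigorous.
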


\begin{proof}
	We prove this theorem by an example, considering the coverage problem of 4 circles, the coverage ratio $P_c$ is calculated as the ratio of circles' union area $S_{union}$ to the area of its circumscribed square $S_{A^{'}B^{'}C^{'}D^{'}}$. The circles are tangent to the edge of circumscribed square. The following example illustrates 4 same circles with radius $r$ arranged in two patterns. 
	\begin{figure*}[hbtp]
		\centering
		\subfigure[4 circles tangent to each other.]{
			\begin{minipage}{2.55in}
				\centering
				\includegraphics[width=2.55in,height=2.55in]{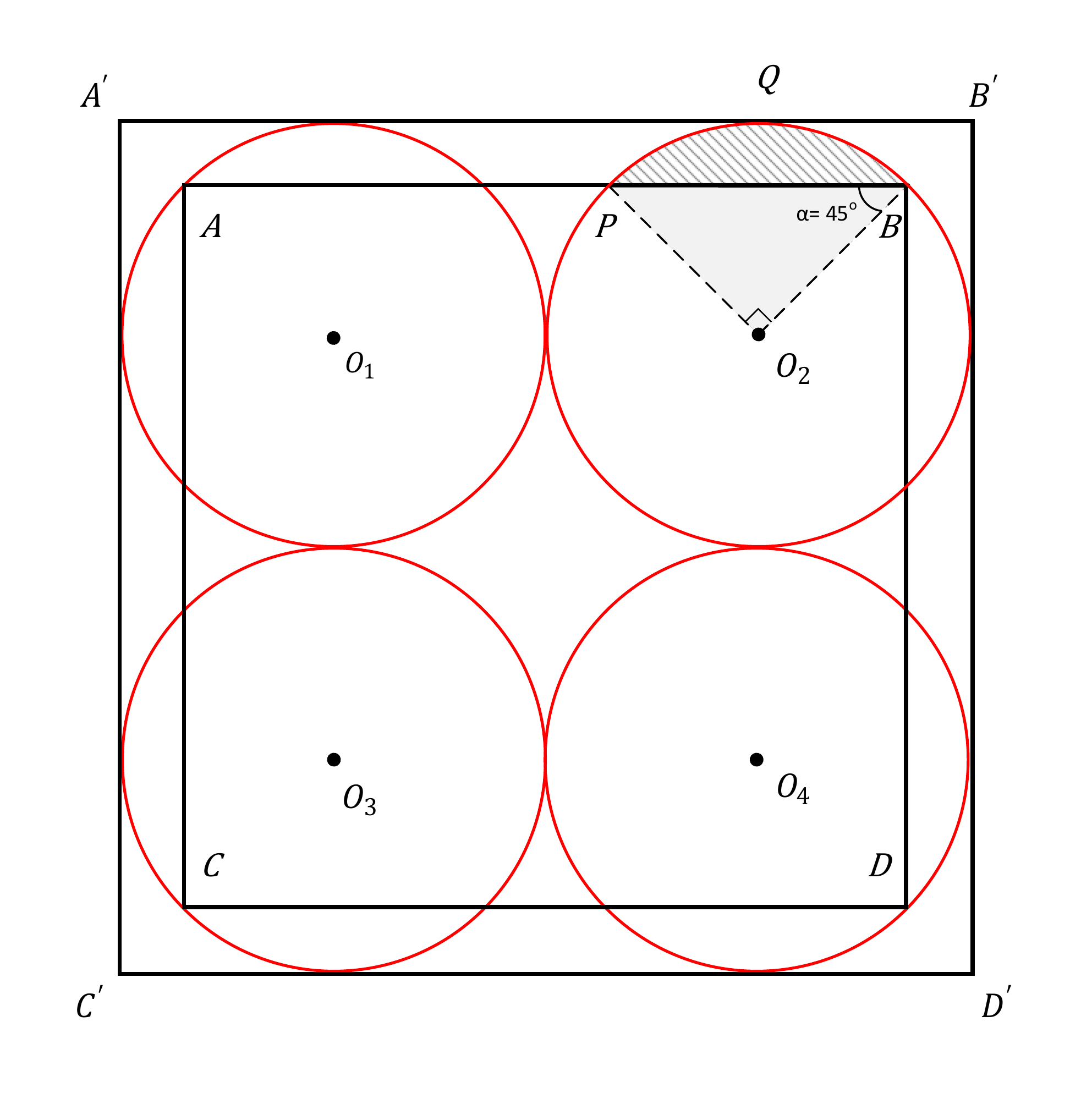}	
			\end{minipage}
		}
		\subfigure[4 circles intersect in a certain pattern.]{
			\begin{minipage}{2.56in}
				\centering
				\includegraphics[width=2.56in,height=2.55in]{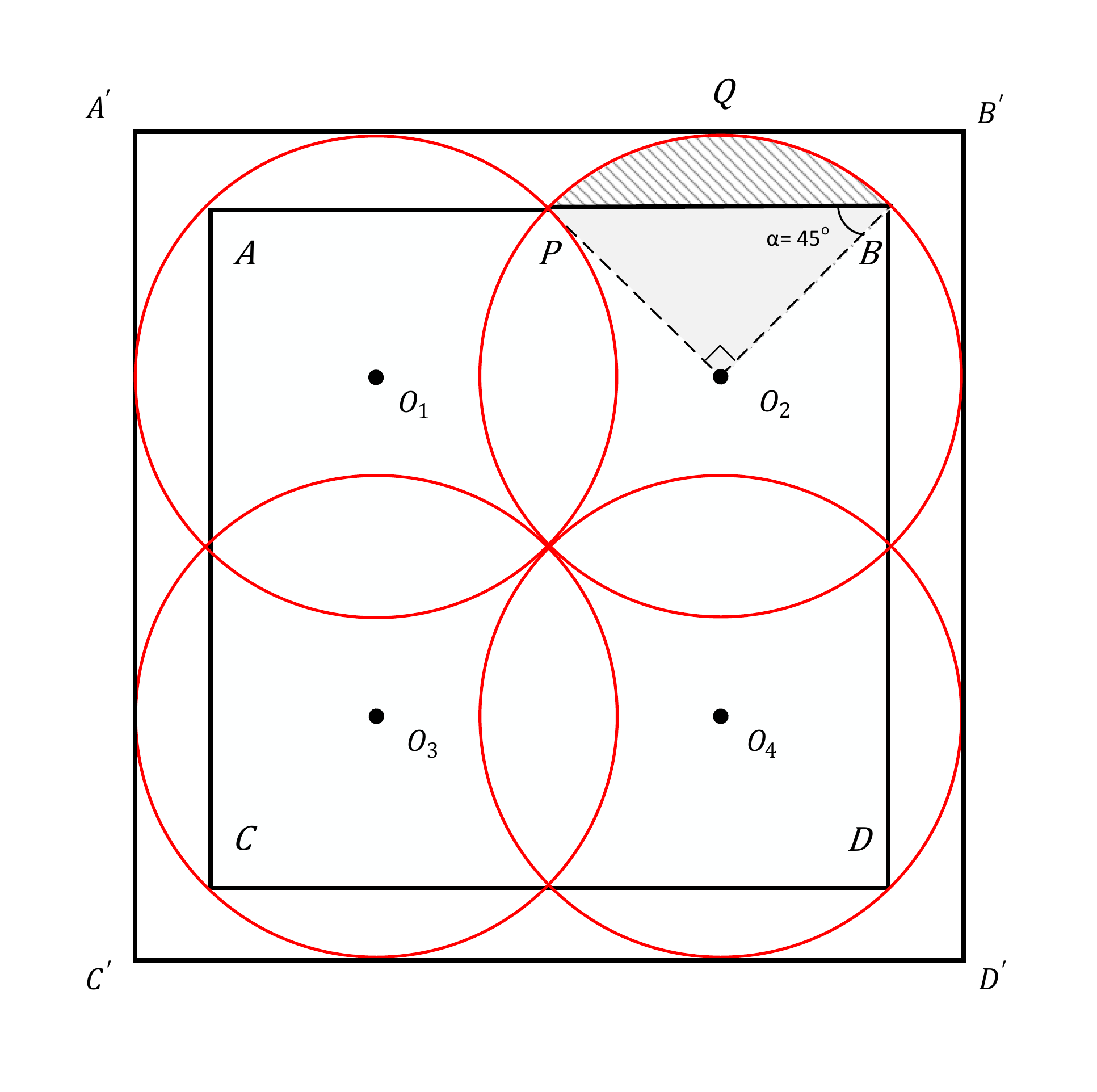}	
			\end{minipage}
		}
	
		\caption{The diagram of 4 circles arranged in different patterns.}	
	\end{figure*}

	As for the circumstance of 4 circles tangent to each other shown in Fig. A.9(a), due to the tangent and symmetric relation, 
	\begin{equation}
	\left\{
	\begin{aligned}
	&S_{union1} = 4\cdot S_{circle}\\		
	&S_{circle} = \pi \cdot r^{2}	\\	
	&S_{square1} = len(A^{'}B^{'})^2 = (4r)^2\\
	\end{aligned}		
	\right.
	\end{equation}
	So, the coverage ratio $P_{c1} = S_{union1}/S_{square1} = \frac{\pi}{4} \approx 0.7854$ .
	
	As for the other circumstance of 4 circles intersect in a certain pattern in Fig. A.9(b), there are
	\begin{equation}
	\left\{
	\begin{aligned}
	&S_{union2} =  S_{ABCD} + 8\cdot S_{PQB}\\		
	&S_{ABCD} = len(AB)^2 = len(2\cdot PB)^2 = (\frac{2r}{\cos \alpha})^2	\\
	&S_{PQB} = S_{O_{2}PQB}	- S_{O_{2}PB} = \frac{\angle PO_{2}B }{2\pi}\cdot \pi r^2 - \frac{1}{2}r^2\\& =  \frac{1}{4}\pi r^2 - \frac{1}{2}r^2\\
	&S_{square2} = len(A^{'}B^{'})^2\\
	&len(A^{'}B^{'}) = len(O_{1}O_{2})+2\cdot r = 2r + \frac{r}{\cos \alpha}
	\end{aligned}		
	\right.
	\end{equation}
	So, the coverage ratio $P_{c2} = S_{union2}/S_{square2} = \frac{4 + 2(\pi -2)\cdot \cos^{2}\alpha}{(1+2\cos \alpha)^2} \approx 0.8822$. $P_{c2}$ is nearly $10\%$ higher than the coverage ratio $P_{c1}$, which means the arrangement of circles tangent to each other does not necessarily lead to the maximum coverage ratio. Therefore, the theorem has been proved by this example.	
\end{proof}


\begin{thebibliography}{99}
	
	\bibitem{sheng2006distributed}Sheng W, Yang Q, Tan J, et al. Distributed multi-robot coordination in area exploration. Robotics and Autonomous Systems, 2006, 54(12): 945-955.
	
	\bibitem{burgard2005coordinated}Burgard W, Moors M, Stachniss C, et al. Coordinated multi-robot exploration. IEEE Transactions on robotics, 2005, 21(3): 376-386.
	
	\bibitem{yamauchi1998frontier}Yamauchi B. Frontier-based exploration using multiple robots. Agents. 1998, 98: 47-53.
	
	\bibitem{faigl2015benchmarking}Faigl J, Kulich M. On benchmarking of frontier-based multi-robot exploration strategies. 2015 european conference on mobile robots (ECMR). IEEE, 2015: 1-8.
	
	\bibitem{sharma2016frontier}Sharma K R, Honc D, Dusek F, et al. Frontier Based Multi Robot Area Exploration Using Prioritized Routing. ECMS. 2016: 25-30.
	
	\bibitem{burgard2000collaborative}Burgard W, Moors M, Fox D, et al. Collaborative multi-robot exploration. ICRA. 2000: 476-481.
	
	\bibitem{colares2016next}Colares R G, Chaimowicz L. The next frontier: combining information gain and distance cost for decentralized multi-robot exploration. Proceedings of the 31st Annual ACM Symposium on Applied Computing. ACM, 2016: 268-274.
	
	\bibitem{pinto2016supersizing}Pinto L, Gupta A. Supersizing self-supervision: Learning to grasp from 50k tries and 700 robot hours. 2016 IEEE international conference on robotics and automation (ICRA). IEEE, 2016: 3406-3413.
	
	\bibitem{chen2014door}Chen W, Qu T, Zhou Y, et al. Door recognition and deep learning algorithm for visual based robot navigation. 2014 IEEE International Conference on Robotics and Biomimetics (ROBIO 2014). IEEE, 2014: 1793-1798.
	
	\bibitem{shvets2018automatic}Shvets A A, Rakhlin A, Kalinin A A, et al. Automatic instrument segmentation in robot-assisted surgery using deep learning. 2018 17th IEEE International Conference on Machine Learning and Applications (ICMLA). IEEE, 2018: 624-628.
	
	\bibitem{kretzschmar2016socially}Kretzschmar H, Spies M, Sprunk C, et al. Socially compliant mobile robot navigation via inverse reinforcement learning. The International Journal of Robotics Research, 2016, 35(11): 1289-1307.
	
	\bibitem{gu2017deep}Gu S, Holly E, Lillicrap T, et al. Deep reinforcement learning for robotic manipulation with asynchronous off-policy updates. 2017 IEEE international conference on robotics and automation (ICRA). IEEE, 2017: 3389-3396.
	
	\bibitem{kahn2018self}Kahn G, Villaflor A, Ding B, et al. Self-supervised deep reinforcement learning with generalized computation graphs for robot navigation. 2018 IEEE International Conference on Robotics and Automation (ICRA). IEEE, 2018: 1-8.
	
	\bibitem{yamauchi1998frontier}Yamauchi B. Frontier-based exploration using multiple robots. Agents. 1998, 98: 47-53.
	
	\bibitem{carrillo2015autonomous}Carrillo H, Dames P, Kumar V, et al. Autonomous robotic exploration using occupancy grid maps and graph SLAM based on Shannon and Rényi entropy. 2015 IEEE international conference on robotics and automation (ICRA). IEEE, 2015: 487-494.
	
	\bibitem{mox2018information}Mox D, Cowley A, Hsieh M A, et al. Information Based Exploration with Panoramas and Angle Occupancy Grids. Distributed Autonomous Robotic Systems. Springer, Cham, 2018: 45-58.
	
	\bibitem{zlot2002multi}Zlot R, Stentz A, Dias M B, et al. Multi-robot exploration controlled by a market economy. Proceedings 2002 IEEE International Conference on Robotics and Automation (Cat. No. 02CH37292). IEEE, 2002, 3: 3016-3023.
	
	\bibitem{gabriely2001spanning}Gabriely Y, Rimon E. Spanning-tree based coverage of continuous areas by a mobile robot. Annals of mathematics and artificial intelligence, 2001, 31(1-4): 77-98.
	
	\bibitem{andre2016collaboration}Andre T, Bettstetter C. Collaboration in multi-robot exploration: to meet or not to meet?. Journal of Intelligent \& Robotic Systems, 2016, 82(2): 325-337.
	
	\bibitem{corah2017efficient}Corah M, Michael N. Efficient Online Multi-robot Exploration via Distributed Sequential Greedy Assignment. Robotics: Science and Systems. 2017.
	
	\bibitem{corah2019communication}Corah M, O’Meadhra C, Goel K, et al. Communication-efficient planning and mapping for multi-robot exploration in large environments. IEEE Robotics and Automation Letters, 2019, 4(2): 1715-1721.
	
	\bibitem{kretzschmar2016socially}Kretzschmar H, Spies M, Sprunk C, et al. Socially compliant mobile robot navigation via inverse reinforcement learning. The International Journal of Robotics Research, 2016, 35(11): 1289-1307.
	
	\bibitem{kahn2018self}Kahn G, Villaflor A, Ding B, et al. Self-supervised deep reinforcement learning with generalized computation graphs for robot navigation. 2018 IEEE International Conference on Robotics and Automation (ICRA). IEEE, 2018: 1-8.
	
	\bibitem{geng2018learning}Geng M, Zhou X, Ding B, et al. Learning to cooperate in decentralized multi-robot exploration of dynamic environments. International Conference on Neural Information Processing. Springer, Cham, 2018: 40-51.
	
	\bibitem{sukhbaatar2016learning}Sukhbaatar S, Fergus R. Learning multiagent communication with backpropagation. Advances in Neural Information Processing Systems. 2016: 2244-2252.
	
	\bibitem{geng2019learning}Geng M, Xu K, Zhou X, et al. Learning to cooperate via an attention-based communication neural network in decentralized multi-robot exploration. Entropy, 2019, 21(3): 294.
	
	\bibitem{liu2019learning}Liu S, Geng M, Xu K. Learning to Communicate Efficiently with Group Division in Decentralized Multi-agent Cooperation. 2019 IEEE International Conference on Service-Oriented System Engineering (SOSE). IEEE, 2019: 331-3316.
	
	\bibitem{lowe2017multi}Lowe R, Wu Y, Tamar A, et al. Multi-agent actor-critic for mixed cooperative-competitive environments. Advances in Neural Information Processing Systems. 2017: 6379-6390.
	
	\bibitem{foerster2018counterfactual}Foerster J N, Farquhar G, Afouras T, et al. Counterfactual multi-agent policy gradients. Thirty-Second AAAI Conference on Artificial Intelligence. 2018.
	
	
	
	\bibitem{Lowe17multi}Lowe R, Wu Y, Tamar A, et al. Multi-agent actor-critic for mixed cooperative-competitive environments. Advances in Neural Information Processing Systems. 2017: 6379-6390.
	
	
\end{thebibliography}
\end{document}